\begin{document}
\bstctlcite{IEEEexample:BSTcontrol}
\theoremheaderfont{\bfseries\upshape}
\theoremseparator{:}
\newtheorem{proof}{Proof}
\newtheorem{theorem}{Theorem}
\newtheorem{assumption}{Assumption}
\newtheorem{proposition}{Proposition}
\newtheorem{definition}{Definition}
\newtheorem{lemma}{Lemma}
\newtheorem{corollary}{Corollary}
\newtheorem{remark}{Remark}
\newtheorem{construction}{Construction}
\newtheorem{problem}{Problem}
\newtheorem{alg}{Algorithm}[section]
\captionsetup{font={scriptsize}}
\renewcommand{\IEEEQED}{\IEEEQEDclosed}

\newcommand{\supp}{\mathop{\rm supp}}
\newcommand{\sinc}{\mathop{\rm sinc}}
\newcommand{\spann}{\mathop{\rm span}}
\newcommand{\essinf}{\mathop{\rm ess\,inf}}
\newcommand{\esssup}{\mathop{\rm ess\,sup}}
\newcommand{\Lip}{\rm Lip}
\newcommand{\sign}{\mathop{\rm sign}}
\newcommand{\osc}{\mathop{\rm osc}}
\newcommand{\R}{{\mathbb{R}}}
\newcommand{\Z}{{\mathbb{Z}}}
\newcommand{\C}{{\mathbb{C}}}
\newcommand*{\affaddr}[1]{#1} 
\newcommand*{\affmark}[1][*]{\textsuperscript{#1}}
\newcommand*{\email}[1]{\texttt{#1}}

\title{Federated Learning with Differential Privacy: Algorithms and Performance Analysis}
\author{{Kang Wei, Jun Li, Ming Ding, Chuan Ma, Howard H. Yang, Farokhi Farhad,\\ Shi Jin, Tony Q. S. Quek, H. Vincent Poor}
\thanks{Kang~Wei, Jun~Li, and Chuan Ma are with School of Electrical and Optical Engineering, Nanjing University of Science and Technology, Nanjing 210094, China (e-mail:\{wei.kang, jun.li, chuan.ma\}@njust.edu.cn).}
\thanks{Ming~Ding and Farokhi Farhad are with Data61, CSIRO, Sydney, NSW 2015, Australia (e-mail:\{ming.ding, Farhad.Farokhi\}@data61.csiro.au).}
\thanks{Howard~Hao~Yang and Tony~Q.~S.~Quek are with the Information System Technology and Design Pillar, Singapore University of Technology and Design, Singapore (e-mail:\{howard yang, tonyquek\}@sutd.edu.sg).}
\thanks{Shi~Jin is with the National Mobile Communications Research Laboratory, Southeast University, Nanjing 210096, China (e-mail: jinshi@seu.edu.cn).}
\thanks{H.~Vincent~Poor is with the Department of Electrical Engineering, Princeton University, Princeton, NJ 08544 USA (e-mail: poor@princeton.edu).}}
\maketitle
\begin{abstract}
Federated learning (FL), as a manner of distributed machine learning, is capable of significantly preserving clients' private data from being exposed to external adversaries. Nevertheless, private information can still be divulged by analyzing on the differences of uploaded parameters from clients, e.g., weights trained in deep neural networks. In this paper, to effectively prevent information leakage, we propose a novel framework based on the concept of differential privacy (DP), in which artificial noises are added to the parameters at the clients side before aggregating, namely, noising before model aggregation FL (NbAFL). First, we prove that the NbAFL can satisfy DP under distinct protection levels by properly adapting different variances of artificial noises. Then we develop a theoretical convergence bound of the loss function of the trained FL model in the NbAFL. Specifically, the theoretical bound reveals the following three key properties: 1) There is a tradeoff between the convergence performance and privacy protection levels, i.e., a better convergence performance leads to a lower protection level; 2) Given a fixed privacy protection level, increasing the number $N$ of overall clients participating in FL can improve the convergence performance; 3) There is an optimal number of maximum aggregation times (communication rounds) in terms of convergence performance for a given protection level. Furthermore, we propose a $K$-random scheduling strategy, where $K$ ($1<K<N$) clients are randomly selected from the $N$ overall clients to participate in each aggregation. We also develop the corresponding convergence bound of the loss function in this case and the $K$-random scheduling strategy can also retain the above three properties. Moreover, we find that there is an optimal $K$ that achieves the best convergence performance at a fixed privacy level. Evaluations demonstrate that our theoretical results are consistent with simulations, thereby facilitating the designs on various privacy-preserving FL algorithms with different tradeoff requirements on convergence performance and privacy levels.
\end{abstract}

\begin{IEEEkeywords}
Federated learning, differential privacy, convergence performance, information leakage, client selection
\end{IEEEkeywords}

\section{Introduction}
With AlphaGo's glorious success, it is expected that the big data-driven artificial intelligence (AI) will soon be applied in all aspects of our daily life, including medical care, food and agriculture, intelligent transportation systems, etc. At the same time, the rapid proliferations of Internet of Things (IoTs) call for data mining and learning securely and reliably in distributed systems~\cite{8405572, 8673568, 8792179}. When integrating AI in a variety of IoT applications, distributed machine learning (ML) are remarkably effective for many data processing tasks by defining parameterized functions from inputs to outputs as compositions of basic building blocks~\cite{8863729,8373692}. Federated learning (FL), as a recent advance of distributed ML, was proposed, in which data are acquired and processed locally at the clients side, and then the updated ML parameters are transmitted to a central server for aggregating, i.e., averaging on these parameters~\cite{DBLP:journals/corr/McMahanMRA16,DBLP:journals/corr/KonecnyMYRSB16,DBLP:journals/corr/abs-1905-01656}. Typically, clients in FL are distributed devices such as sensors, wearable devices, or mobile phones. The goal of FL is to fit a model generated by an empirical risk minimization (ERM) objective. However, FL also poses several key challenges, such as private information leakage, expensive communication costs between servers and clients, and device variability~\cite{8770530,Yang:2019:FML:3306498.3298981, 2019arXiv190807873L, 8737464, HaoYangFL,8761267}.

Generally, distributed stochastic gradient descent (SGD) is adopted in FL for training ML models. In~\cite{NIPS2011-4247,NIPS2015-5751}, bounds for FL convergence performance were developed based on distributed SGD, with a one-step local update before global aggregations. The work in~\cite{NIPS2017-7117} considered partially global aggregations, where after each local update step, parameter aggregation is performed over a non-empty subset of the clients set. In order to analyze the convergence more effectively, federated proximal (FedProx) was proposed~\cite{DBLP:journals/corr/abs-1812-06127} by adding regularization on each local loss function. The work in~\cite{8664630} obtained the convergence bound of SGD based FL that incorporates non-independent-and-identically-distributed (non-\emph{i.i.d.}) data distributions among clients.

At the same time, with the ever increasing awareness of data security of personal information, privacy preservation has become a worldwide and significant issue, especially for the big data applications and distributed learning systems. One prominent advantage of FL is that it enables local training without personal data exchange between the server and clients, thereby protecting clients' data from being eavesdropped by hidden adversaries. Nevertheless, private information can still be divulged to some extent from adversaries' analyzing on the differences of related parameters trained and uploaded by the clients, e.g., weights trained in neural networks~\cite{Shokri:2015:PDL:2810103.2813687,8737416,ChuanMaFL}.

A natural approach to preventing information leakage is to add artificial noises, known as differentially private (DP) techniques~\cite{TCS-042,Blum:2005:PPS:1065167.1065184}. Existing works on DP based learning algorithms include local~\mbox{DP (LDP)~\cite{Erlingsson:2014:RRA:2660267.2660348, 8731512, 8640266},} DP based distributed SGD~\cite{Abadi:2016:DLD:2976749.2978318, DBLP:journals/corr/abs-1906-09679} and DP meta learning~\cite{2019arXiv190905830L}. In the LDP, each client perturbs its information locally and only sends a randomized version to a server, thereby protecting both the clients and server against private information leakage. The work in~\cite{8731512} proposed solutions to building up a LDP-compliant SGD, which powers a variety of important ML tasks. The work in~\cite{8640266} considered the distribution estimation at the server over uploaded data from clients while providing protections on these data with LDP. The work in~\cite{Abadi:2016:DLD:2976749.2978318} improved the computational efficiency of DP based SGD by tracking detailed information of the privacy loss, and obtained accurate estimates on the overall privacy loss. The work in~\cite{DBLP:journals/corr/abs-1906-09679} proposed novel DP based SGD algorithms and analyzed their performance bounds which are shown to be related to privacy levels and the sizes of datasets. Also, the work in~\cite{2019arXiv190905830L} focused on the class of gradient-based parameter-transfer methods and developed a DP based meta learning algorithm that not only satisfies the privacy requirement but also retains provable learning performance in convex settings.

More specifically, DP based FL approaches are usually devoted to capturing the tradeoff between privacy and convergence performance in the training process. The work in~\cite{DBLP:journals/corr/abs-1712-07557} proposed a FL algorithm with the consideration on preserving clients' privacy. This algorithm can achieve a good training performance at a given privacy level, especially when there is a sufficiently large number of participating clients. The work in~\cite{DBLP:journals/corr/abs-1812-03224} presented an alternative approach that utilizes both DP and secure multiparty computation (SMC) to prevent differential attacks. However, the above two works on DP-based FL design have not taken into account the privacy protection during the parameter uploading stage, i.e., the clients' private information can be potentially intercepted by hidden adversaries when uploading the training results to the server. Moreover, these two works only showed empirical results by simulations, but lacked theoretical analysis on the FL system, such as tradeoff between privacy, convergence performance, and convergence rate. Up to now, the theoretical analysis on convergence behavior of FL with privacy-preserving noise perturbations has not yet been detailed in existing literatures, which will be the major focus of our work in this paper.

In this paper, to effectively prevent information leakage, we propose a novel framework based on the concept of differential privacy (DP), in which each client perturbs its trained parameters locally by purposely adding noises before uploading them to the server for aggregation, namely, noising before model aggregation FL (NbAFL). To the best of authors' knowledge, this is the first piece of work of its kind that theoretically analyzes the convergence property of differentially private FL algorithms. First, we prove that the proposed NbAFL scheme satisfies the requirement of DP in terms of global data under a certain noise perturbation level with Gaussian noises by properly adapting their variances. Then, we develop theoretically a convergence bound of the loss function of the trained FL model in the NbAFL with artificial Gaussian noises. Our developed bound reveals the following three key properties: 1) There is a tradeoff between the convergence performance and privacy protection levels, i.e., a better convergence performance leads to a lower protection level; 2) Increasing the number $N$ of overall clients participating in FL can improve the convergence performance, given a fixed privacy protection level; 3) There is an optimal number of maximum aggregation times in terms of convergence performance for a given protection level. Furthermore, we propose a $K$-random scheduling strategy, where $K$ ($1<K<N$) clients are randomly selected from the $N$ overall clients to participate in each aggregation. We also develop the corresponding convergence bound of the loss function in this case. From our analysis, the $K$-random scheduling strategy can retain the above three properties. Also, we find that there exists an optimal value of $K$ that achieves the best convergence performance at a fixed privacy level. Evaluations demonstrate that our theoretical results are consistent with simulations. Therefore, our analytical results are helpful for the design on privacy-preserving FL architectures with different tradeoff requirements on convergence performance and privacy levels.

The remainder of this paper is organized as follows. In Section~\ref{sec:Preliminaries}, we introduce backgrounds on FL, DP and a conventional DP-based FL algorithm. In Section~\ref{sec:FLDP}, we detail the proposed NbAFL and analyze the privacy performance based on DP. In Section~\ref{sec:Con_NbAFL}, we analyze the convergence bound of NbAFL and reveal the relationship between privacy levels, convergence performance, the number of clients, and the number of global aggregations. In Section~\ref{Sec:K-Random}, we propose the $K$-random scheduling scheme and develop the convergence bound. We show the analytical results and simulations in Section~\ref{Sec:Exm_Res}. We conclude the paper in Section~\ref{Sec:Concl}. A summary of basic concepts and notations is provided in Tab.~\ref{tab:SummaryofMainNotations}.
\begin{table}[h]\caption{Summary of Main Notations}
\centering
\begin{tabular}{ll}
\hline
$\mathcal M$& A randomized mechanism for DP\\
$x,x'$& Adjacent databases\\
$\epsilon, \delta$& The parameters related to DP\\
$\mathcal C_i$& The $i$-th client\\
$\mathcal D_i$& The database held by the owner $\mathcal C_i$\\
$\mathcal D$& The database held by all the clients\\
$|\cdot|$& The cardinality of a set\\
$N$& Total number of all clients\\
$K$& The number of chosen clients ($1<K<N$)\\
$t$& The index of the $t$-th aggregation\\
$T$& The number of aggregation times\\
$\mathbf{w}$& The vector of model parameters\\
$F(\mathbf{w})$& Global loss function\\
$F_{i}(\mathbf{w})$& Local loss function from the $i$-th client\\
$\mu$& A presetting constant of the proximal term\\
$\mathbf{w}^{(t)}_{i}$& Local uploading parameters of the $i$-th client\\
$\mathbf{w}^{(0)}$& Initial parameters of the global model\\
$\mathbf{w}^{(t)}$& Global parameters generated from all local parameters\\
&at the $t$-th aggregation\\
$\mathbf{v}^{(t)}$& Global parameters generated from $K$ clients' parameters\\
&at the $t$-th aggregation\\
$\mathbf{w}^{*}$& True optimal model parameters that minimize $F(\mathbf{w})$\\
$\widetilde{\mathbf{W}}$& The set of all local parameters with pertubation\\
\hline
\end{tabular}
\label{tab:SummaryofMainNotations}
\end{table}

\section{Preliminaries}\label{sec:Preliminaries}
In this section, we will present preliminaries and related background knowledge on FL and DP. Also, we introduce a conventional DP-based FL algorithm that will be discussed in our following analysis as a benchmark.
\begin{figure}[htb]
\centering
\includegraphics[width=3.4in,angle=0]{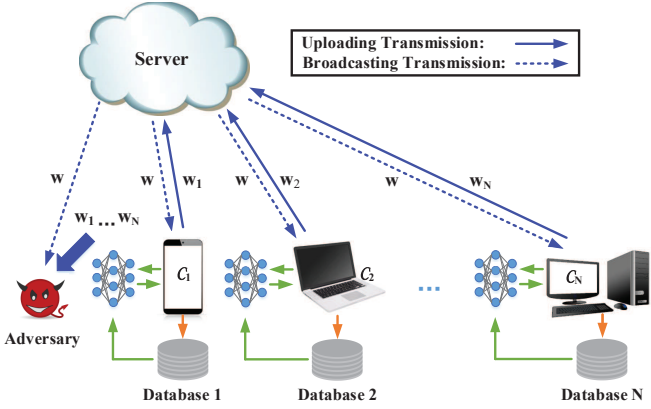}
\caption{A FL training model with hidden adversaries who can eavesdrop trained parameters from both the clients and the server.}\label{fig:Traning model}
\end{figure}
\subsection{Federated Learning}\label{sec:Federated Learning}
Let us consider a general FL system consisting of one server and $N$ clients, as depicted in Fig.~\ref{fig:Traning model}.
Let $\mathcal D_i$ denote the local database held by the client $\mathcal C_i$, where $i\in \{1, 2,\ldots, N\}$. At the server, the goal is to learn a model over data that resides at the $N$ associated clients. An active client, participating in the local training, needs to find a vector $\mathbf{w}$ of an AI model to minimize a certain loss function. Formally, the server aggregates the weights sent from the $N$ clients as
\begin{equation}\label{equ:Aggregating}
\mathbf{w} =\sum_{i=1}^{N}{p_{i}\mathbf{w}_{i}},
\end{equation}
where $\mathbf{w}_{i}$ is the parameter vector trained at the $i$-th client, $\mathbf{w}$ is the parameter vector after aggregating at the server, $N$ is the number of clients, $p_{i} = \frac{\vert \mathcal D_i\vert}{\vert \mathcal D\vert}\geq 0$ with $\sum_{i=1}^{N}{p_{i}}=1$, and $\vert \mathcal D\vert = \sum_{i=1}^{N}{\vert \mathcal D_{i}\vert}$ is the total size of all data samples. Such an optimization problem can be formulated as
\begin{equation}\label{equ:Global objective function}
\mathbf{w}^{*}=\arg\min_{\mathbf{w}}{\sum_{i=1}^{N}{p_{i}F_{i}(\mathbf{w})}},
\end{equation}
where $F_{i}(\cdot)$ is the local loss function of the $i$-th client. Generally, the local loss function $F_{i}(\cdot)$ is given by local empirical risks. The training process of such a FL system usually contains the following four steps:
\begin{basedescript}{\desclabelstyle{\pushlabel}\desclabelwidth{4em}}
\item[$\bullet$ \textbf{Step 1}:]
\emph{Local training: }All active clients locally compute training gradients or parameters and send locally trained ML parameters to the server;
\item[$\bullet$ \textbf{Step 2}:]
\emph{Model aggregating:} The server performs secure aggregation over the uploaded parameters from $N$ clients without learning local information;
\item[$\bullet$ \textbf{Step 3}:]
\emph{Parameters broadcasting: }The server broadcasts the aggregated parameters to the $N$ clients;
\item[$\bullet$ \textbf{Step 4}:]
\emph{Model updating: }All clients update their respective models with the aggregated parameters and test the performance of the updated models.
\end{basedescript}

In the FL process, the $N$ clients with the same data structure collaboratively learn a ML model with the help of a cloud server.
After a sufficient number of local training and update exchanges between the server and its associated clients, the solution to the optimization problem \eqref{equ:Global objective function} is able to converge to that of the global optimal learning model.
\subsection{Threat Model}
The server in this paper is assumed to be honest. However, there are external adversaries targeting at clients' private information.
Although the individual dataset $\mathcal D_i$ of the $i$-th client is kept locally in FL, the intermediate parameter $\mathbf{w}_{i}$ needs to be shared with the server, which may reveal the clients' private information as demonstrated by model inversion attacks.
For example, authors in~\cite{Fredrikson:2015:MIA:2810103.2813677} demonstrated a model-inversion attack that recovers images from a facial recognition system.
In addition, the privacy leakage can also happen in the broadcasting (through downlink channels) phase by analyzing the global parameter $\mathbf{w}$.

We also assume that uplink channels are more secure than downlink broadcasting channels, since clients can be assigned to different channels (e.g., time slots, frequency bands) dynamically in each uploading time, while downlink channels are broadcasting. Hence, we assume that there are at most $L$ ($L\leq T$) exposures of uploaded parameters from each client in the uplink\footnote{Here we assume that the adversary cannot know where the parameters come from.} and $T$ exposures of aggregated parameters in the downlink, where $T$ is the number of aggregation times.
\subsection{Differential Privacy}\label{sec:Differential Privacy}
$(\epsilon, \delta)$-DP provides a strong criterion for privacy preservation of distributed data processing systems. Here, $\epsilon > 0$ is the distinguishable bound of all outputs on neighboring datasets $\mathcal D_i, \mathcal D_i'$ in a database, and $\delta$ represents the event that the ratio of the probabilities for two adjacent datasets $\mathcal D_i, \mathcal D_i'$ cannot be bounded by $e^{\epsilon}$ after adding a privacy preserving mechanism. With an arbitrarily given $\delta$, a privacy preserving mechanism with a larger $\epsilon$ gives a clearer distinguishability of neighboring datasets and hence a higher risk of privacy violation.
Now, we will formally define DP as follows.
\begin{definition}($(\epsilon, \delta)$-DP~\cite{TCS-042}):
A randomized mechanism $\mathcal M: \mathcal{X}\rightarrow \mathcal{R}$ with domain $\mathcal{X}$ and range $\mathcal{R}$ satisfies $(\epsilon, \delta)$-DP,
if for all measurable sets $\mathcal S\subseteq \mathcal{R}$ and for any two adjacent databases $\mathcal D_i, \mathcal D_i'\in \mathcal{X}$,
\begin{equation}\label{equ:Differential privacy}
\emph{Pr}[\mathcal M(\mathcal D_i)\in \mathcal S]\leq e^{\epsilon}\emph{Pr}[\mathcal M(\mathcal D_i')\in \mathcal S]+\delta.
\end{equation}
\end{definition}

For numerical data, a Gaussian mechanism defined in~\cite{TCS-042} can be used to guarantee $(\epsilon, \delta)$-DP. According to~\cite{TCS-042}, we present the following DP mechanism by adding artificial Gaussian noises.

In order to ensure that the given noise distribution $n\sim \mathcal N(0,\sigma^{2})$ preserves $(\epsilon, \delta)$-DP, where $\mathcal N$ represents the Gaussian distribution, we choose noise scale $\sigma\geq c\Delta s/\epsilon$ and the constant $c\ge\sqrt{2\ln(1.25/\delta)}$ for $\epsilon \in (0,1)$. In this result, $n$ is the value of an additive noise sample for a data in the dateset, $\Delta s$ is the sensitivity of the function $s$ given by $\Delta s =\max_{\mathcal D_i, \mathcal D_i'}{\Vert s(\mathcal D_i)-s(\mathcal D_i')\Vert}$, and $s$ is a real-valued function.

Considering the above DP mechanism, choosing an appropriate level of noise remains a significant research problem, which will affect the privacy guarantee of clients and the convergence rate of the FL process.
\section{Federated Learning with Differential Privacy}\label{sec:FLDP}
In this section, we first introduce the concept of global DP and analyze the DP performance in the context of FL. Then we propose the NbAFL scheme that can satisfy the DP requirement by adding proper noisy perturbations at both the clients and the server.
\subsection{Global Differential Privacy}
Here, we define a global $(\epsilon, \delta)$-DP requirement for both uplink and downlink channels. From the uplink perspective, using a clipping technique, we can ensure that $\Vert\mathbf{w}_{i}\Vert \leq C$, where $\mathbf{w}_{i}$ denotes training parameters from the $i$-th client without perturbation and $C$ is a clipping threshold for bounding $\mathbf{w}_{i}$. We assume that the batch size in the local training is equal to the number of training samples and then define local training process in the $i$-th client by
\begin{multline}\label{localtrain}
s_{\text{U}}^{\mathcal D_i}\triangleq \mathbf{w}_{i}=\arg\min\limits_{\mathbf{w}}{F_{i}(\mathbf{w}, \mathcal D_i)}\\
=\frac{1}{\vert \mathcal D_i \vert}\sum_{j = 1}^{\vert \mathcal D_i \vert}\arg\min\limits_{\mathbf{w}}{F_{i}(\mathbf{w}, \mathcal D_{i,j})},
\end{multline}
where $\mathcal D_i$ is the $i$-th client's database and $\mathcal D_{i,j}$ is the $j$-th sample in $\mathcal D_i$. Thus, the sensitivity of $s_{\text{U}}^{\mathcal D_i}$ can be expressed as
\begin{multline}\label{SensitivityforUP}
\Delta s_{\text{U}}^{\mathcal D_i} =\max_{\mathcal D_i, \mathcal D_i'}{\Vert s_{\text{U}}^{\mathcal D_i}-s_{\text{U}}^{ \mathcal D_i'}\Vert}\\
=\max_{\mathcal D_i, \mathcal D_i'}\left\Vert \frac{1}{\vert \mathcal D_i \vert}\sum_{j = 1}^{\vert \mathcal D_i \vert}\arg\min\limits_{\mathbf{w}}{F_{i}(\mathbf{w}, \mathcal D_{i,j})}\right.\\
\left.-\frac{1}{\vert \mathcal D_i' \vert}\sum_{j = 1}^{\vert \mathcal D_i' \vert}\arg\min\limits_{\mathbf{w}}{F_{i}(\mathbf{w}, \mathcal D_{i,j}')}\right\Vert= \frac{2C}{\vert \mathcal D_i \vert},
\end{multline}
where $\mathcal D_i'$ is an adjacent dataset to $\mathcal D_i$ which has the same size but only differ by one sample, and $\mathcal D_{i,j}'$ is the $j$-th sample in $\mathcal D_i'$.
From the above result, a global sensitivity in the uplink channel can be defined by
\begin{equation}
\Delta s_{\text{U}}\triangleq \max\left\{\Delta s_{\text{U}}^{\mathcal D_i}\right\},\quad\forall i.
\end{equation}
To achieve a small global sensitivity, the ideal condition is that all the clients use sufficient local datasets for training.
Hence, we define the minimum size of the local datasets by $m$ and then obtain $\Delta s_{\text{U}} = \frac{2C}{m}$.
To ensure $(\epsilon, \delta)$-DP for each client in the uplink in one exposure, we set the noise scale, represented by the standard deviation of the additive Gaussian noise, as $\sigma_{\text{U}}= c\Delta s_{\text{U}}/\epsilon$.
Considering $L$ exposures of local parameters, we need to set $\sigma_{\text{U}}= cL\Delta s_{\text{U}}/\epsilon$ due to the linear relation between $\epsilon$ and $\sigma_{\text{U}}$ in the Gaussian mechanism.

From the downlink perspective, the aggregation operation for $\mathcal D_i$ can be expressed as
\begin{multline}\label{DLfunction}
s_{\text{D}}^{\mathcal D_i}\triangleq \mathbf{w}=p_1\mathbf{w}_{1}+\ldots+p_i\mathbf{w}_{i}+\ldots+p_N\mathbf{w}_{N},
\end{multline}
where $1\leq i\leq N$ and $\mathbf{w}$ is the aggregated parameters at the server to be broadcast to the clients. Regarding the sensitivity of $s_{\text{D}}^{\mathcal D_i}$, i.e., $\Delta s_{\text{D}}^{\mathcal D_i}$, we have the following lemma.
\begin{lemma}[Sensitivity after the aggregation operation]\label{lem:SensitivityforSum}
In FL training process, the sensitivity for $\mathcal D_i$ after the aggregation operation $s_{\emph{D}}^{\mathcal D_i}$ is given by
\begin{equation}\label{equ:SensitivityforDL}
\begin{aligned}
\Delta s_{\emph{D}}^{\mathcal D_i} = \frac{2Cp_{i}}{m}.
\end{aligned}
\end{equation}
\end{lemma}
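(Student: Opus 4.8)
The plan is to exploit the \emph{linearity} of the aggregation operation \eqref{DLfunction} together with the uplink sensitivity already computed in \eqref{SensitivityforUP}. The key observation is that passing from a database $\mathcal D_i$ to an adjacent database $\mathcal D_i'$ perturbs only the training result of the $i$-th client: since every other client $\mathcal C_j$ ($j\neq i$) retains its own dataset $\mathcal D_j$, and each $\mathbf{w}_j$ is by \eqref{localtrain} a function of $\mathcal D_j$ alone, the parameters $\mathbf{w}_j$ for $j\neq i$ are unaffected by the change at client $i$.

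First I would write the aggregated output for both databases and subtract. Because every term carrying an index $j\neq i$ cancels, we are left with
\begin{equation}
s_{\text{D}}^{\mathcal D_i}-s_{\text{D}}^{\mathcal D_i'}=p_i\left(\mathbf{w}_{i}-\mathbf{w}_{i}'\right),
\end{equation}
where $\mathbf{w}_{i}$ and $\mathbf{w}_{i}'$ denote the parameters trained on $\mathcal D_i$ and $\mathcal D_i'$, respectively. Taking norms and maximizing over all adjacent pairs, the nonnegative scalar $p_i$ factors out, giving
\begin{equation}
\Delta s_{\text{D}}^{\mathcal D_i}=\max_{\mathcal D_i,\mathcal D_i'} p_i\Vert\mathbf{w}_{i}-\mathbf{w}_{i}'\Vert = p_i\max_{\mathcal D_i,\mathcal D_i'}\Vert s_{\text{U}}^{\mathcal D_i}-s_{\text{U}}^{\mathcal D_i'}\Vert .
\end{equation}
The inner maximum is precisely the uplink sensitivity $\Delta s_{\text{U}}^{\mathcal D_i}$ of \eqref{SensitivityforUP}, which under the clipping bound $\Vert\mathbf{w}_i\Vert\leq C$ and the worst-case (minimum) local dataset size $m$ equals $2C/m$. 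Substituting yields $\Delta s_{\text{D}}^{\mathcal D_i}=2Cp_i/m$, as claimed.

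I do not anticipate a serious obstacle; the statement is essentially a one-line consequence of linearity plus the earlier uplink bound. The only points deserving care are: (i) justifying rigorously that the other clients' parameters are genuinely independent of $\mathcal D_i$, which rests on the fact that each $\mathbf{w}_j$ is determined by $\mathcal D_j$ alone through the local objective in \eqref{localtrain}; and (ii) consistently using the minimum size $m$ (rather than the particular $\vert\mathcal D_i\vert$) in the denominator, so that the resulting bound is uniform over all clients and matches the global uplink convention $\Delta s_{\text{U}}=2C/m$ introduced just above the lemma.
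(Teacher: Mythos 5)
Your argument is correct and is essentially the same as the paper's: both proofs observe that only the $i$-th term of the linear aggregation \eqref{DLfunction} changes between adjacent databases, factor out $p_i$, and invoke the uplink sensitivity \eqref{SensitivityforUP} bounded by $2C/m$ via the minimum dataset size. Your explicit care about replacing $\vert\mathcal D_i\vert$ by $m$ matches the paper's inequality $p_i\Delta s_{\text{U}}^{\mathcal D_i}\leq 2Cp_i/m$ followed by setting the sensitivity to that worst-case value.
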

\begin{proof}
See Appendix~\ref{appendix:SensitivityforAgg}.
\end{proof}
\begin{algorithm}
\caption{Noising before Aggregation FL}
\label{alg:NbAFL}
\LinesNumbered
\KwData{$T$, $\mathbf{w}^{(0)}$, $\mu$, $\epsilon$ and $\delta$}
{Initialization: $t = 1$ and $\mathbf{w}^{(0)}_{i} = \mathbf{w}^{(0)}$}, $\forall i$\\
\While {$t \le T$}
{
\textbf{Local training process:}\\
\While {$\mathcal C_i\in \{\mathcal C_1, \mathcal C_2, \ldots,\mathcal C_{N}\}$}
{
Update the local parameters $\mathbf{w}^{(t)}_{i}$ as\\
\quad\quad $\mathbf{w}^{(t)}_{i}=\arg\min\limits_{\mathbf{w}_{i}}{\left(F_{i}(\mathbf{w}_{i})+\frac{\mu}{2}\Vert \mathbf{w}_{i}- \mathbf{w}^{(t-1)}\Vert^{2}\right)}$\\
Clip the local parameters $\mathbf{w}^{(t)}_{i} = \mathbf{w}^{(t)}_{i}/\max\left(1,\frac{\Vert\mathbf{w}^{(t)}_{i}\Vert}{C}\right)$\\
Add noise and upload parameters $\widetilde{\mathbf{w}}^{(t)}_{i}=\mathbf{w}^{(t)}_{i}+\mathbf{n}^{(t)}_{i}$\\
}
\textbf{Model aggregating process:}\\
Update the global parameters $\mathbf{w}^{(t)}$ as\\
\quad\quad $\mathbf{w}^{(t)} = \sum\limits_{i=1}^{N}{p_{i}\widetilde{\mathbf{w}}^{(t)}_{i}}$\\
The server broadcasts global noised parameters \\
\quad\quad$\widetilde{\mathbf{w}}^{(t)}=\mathbf{w}^{(t)}+\mathbf{n}_{\text D}^{(t)}$\\
\textbf{Local testing process:}\\
\While {$\mathcal C_i\in \{\mathcal C_1, \mathcal C_2, \ldots,\mathcal C_{N}\}$}
{
Test the aggregating parameters $\widetilde{\mathbf{w}}^{(t)}$ using local dataset\\
}
$t \leftarrow t + 1$
}
\KwResult{$\widetilde{\mathbf{w}}^{(T)}$}
\end{algorithm}

\begin{remark}
From the above lemma, to achieve a small global sensitivity in the downlin channel which is defined by
\begin{equation}
\Delta s_{\emph{D}}\triangleq \max\left\{ \Delta s_{\emph{D}}^{\mathcal D_i}\right\}=\max\left\{ \frac{2Cp_{i}}{m}\right\},\quad \forall i,
\end{equation}
the ideal condition is that all the clients should use the same size of local datasets for training, i.e., $p_{i}=1/N$.
\end{remark}

From the above remark, when setting $p_{i}=1/N$, $\forall i$, we can obtain the optimal value of the sensitivity $\Delta s_{\text{D}}$.
So here we should add noise at the client side first and then decide whether or not to add noises at server to satisfy the $(\epsilon, \delta)$-DP criterion in the downlink channel.

\begin{theorem}[DP guarantee for downlink channels]\label{theorem:DPforDownCh}
To ensure $(\epsilon, \delta)$-DP in the downlink channels with $T$ aggregations, the standard deviation of Gaussian noises $\mathbf{n}_{\emph D}$ that are added to the aggregated parameter $\textbf{w}$ by the server can be given as
\begin{equation}\label{equ:NoiseScaleinServer}
\sigma_{\emph{D}} =
\begin{cases}
\frac{2cC\sqrt{T^{2}-L^{2}N}}{mN\epsilon} & T>L\sqrt{N},\\
0&T\leq L\sqrt{N}.
\end{cases}
\end{equation}
\end{theorem}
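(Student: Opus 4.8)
The plan is to reduce the theorem to a single requirement on the \emph{total effective noise variance} carried by each broadcast aggregate, and then solve for the server's share $\sigma_{\text{D}}$. First I would fix $p_i = 1/N$ for all $i$, as justified by the preceding remark, so that by Lemma~\ref{lem:SensitivityforSum} the downlink sensitivity of the aggregation with respect to $\mathcal D_i$ becomes $\Delta s_{\text{D}} = 2C/(mN)$.

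Next I would isolate the noise that actually perturbs the broadcast quantity. Writing the aggregate as $\widetilde{\mathbf{w}}^{(t)} = \frac{1}{N}\sum_{i=1}^{N} \mathbf{w}_i^{(t)} + \frac{1}{N}\sum_{i=1}^{N} \mathbf{n}_i^{(t)} + \mathbf{n}_{\text{D}}^{(t)}$, the deterministic first term is the data-dependent signal with sensitivity $\Delta s_{\text{D}}$, while the two remaining terms are independent zero-mean Gaussians that merge. Since the client noises $\mathbf{n}_i^{(t)} \sim \mathcal N(0, \sigma_{\text{U}}^2)$ are mutually independent, their scaled sum $\frac{1}{N}\sum_i \mathbf{n}_i^{(t)}$ is Gaussian with variance $\sigma_{\text{U}}^2/N$; adding the server noise of variance $\sigma_{\text{D}}^2$ yields a total effective variance $\sigma_{\text{U}}^2/N + \sigma_{\text{D}}^2$. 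The key quantitative observation is that the per-client uplink noise is attenuated by a factor $1/N$ after averaging, so it already contributes part of the downlink protection for free.

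Then I would impose the privacy requirement across the $T$ downlink exposures. By the same linear-composition argument used to set $\sigma_{\text{U}}$ on the uplink, achieving $(\epsilon,\delta)$-DP over $T$ broadcasts demands a per-aggregation effective standard deviation of at least $cT\Delta s_{\text{D}}/\epsilon = 2cTC/(mN\epsilon)$. Equating the effective variance to this target gives $\sigma_{\text{D}}^2 = \left(\frac{2cTC}{mN\epsilon}\right)^2 - \frac{\sigma_{\text{U}}^2}{N}$. Substituting $\sigma_{\text{U}} = 2cLC/(m\epsilon)$ and simplifying produces $\sigma_{\text{D}}^2 = \frac{4c^2C^2}{m^2N^2\epsilon^2}\left(T^2 - L^2 N\right)$, which is exactly the first branch of \eqref{equ:NoiseScaleinServer}. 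The case split follows immediately: the right-hand side is nonnegative only when $T > L\sqrt{N}$, whereas for $T \le L\sqrt{N}$ the aggregated uplink noise alone already meets or exceeds the required level, so the server need add nothing and $\sigma_{\text{D}} = 0$.

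The hard part will be rigorously justifying that the two noise sources may be combined into a single effective Gaussian mechanism whose variance is the sum $\sigma_{\text{U}}^2/N + \sigma_{\text{D}}^2$, and that matching this sum to the single-mechanism target $cT\Delta s_{\text{D}}/\epsilon$ genuinely certifies $(\epsilon,\delta)$-DP for the broadcast. This relies on the client-side perturbations, originally designed for the uplink, remaining a valid and data-independent contribution to the downlink guarantee after aggregation, so that variances simply add. The linear-composition step over the $T$ rounds is the other point to treat with care, since it must stay consistent with the $L$-exposure accounting already baked into $\sigma_{\text{U}}$; once that bookkeeping is fixed, the algebra reducing to $T^2 - L^2 N$ is routine.
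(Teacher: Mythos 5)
Your proposal is correct and follows essentially the same route as the paper's own proof: set $p_i = 1/N$ to get $\Delta s_{\text{D}} = 2C/(mN)$, note that the averaged client noises contribute an effective Gaussian of variance $\sigma_{\text{U}}^2/N$ to the broadcast, require the total effective scale to meet $cT\Delta s_{\text{D}}/\epsilon$, and solve $\sigma_{\text{D}} = \sqrt{\sigma_{\text{A}}^2 - \sigma_{\text{U}}^2/N}$, which yields exactly the stated case split at $T = L\sqrt{N}$. The algebra checks out and no further comparison is needed.
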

\begin{proof}
See Appendix~\ref{appendix:DPforDownCh}.
\end{proof}

\textbf{Theorem~\ref{theorem:DPforDownCh}} shows that to satisfy a $(\epsilon, \delta)$-DP requirement for the downlink channels, additional noises $\mathbf{n}_{\text D}$ need to be added by the server.
With a certain $L$, the standard deviation of additional noises is depending on the relationship between the number of aggregation times $T$ and the number of clients $N$. The intuition is that a larger $T$ can lead to a higher chance of information leakage, while a larger number of clients is helpful for hiding their private information. This theorem also provides the variance value of the noises that should be added to the aggregated parameters. Based on the above results, we propose the following NbAFL algorithm.
\subsection{Proposed NbAFL}
\textbf{Algorithm~\ref{alg:NbAFL}} outlines our NbAFL for training an effective model with a global $(\epsilon, \delta)$-DP requirement. We denote by $\mu$ the presetting constant of the proximal term and by $\mathbf{w}^{(0)}$ the initiate global parameter. At the beginning of this algorithm, the server broadcasts the required privacy level parameters $(\epsilon, \delta)$ are set and the initiate global parameter $\mathbf{w}^{(0)}$ are sent to clients. In the $t$-th aggregation, $N$ active clients respectively train the parameters by using local databases with preset termination conditions. After completing the local training, the $i$-th client, $\forall i$, will add noises to the trained parameters $\mathbf{w}^{(t)}_i$, and upload the noised parameters $\widetilde{\mathbf{w}}^{(t)}_i$ to the server for aggregation.

Then the server update the global parameters ${\mathbf{w}}^{(t)}$ by aggregating the local parameters integrated with different weights. Additive noises $\textbf{n}^{(t)}_{\text{D}}$ are added to this ${\mathbf{w}}^{(t)}$ according to \textbf{Theorem~\ref{theorem:DPforDownCh}} before being broadcast to the clients. Based on the received global parameters $\widetilde{\mathbf{w}}^{(t)}$, each client will estimate the accuracy by using local testing databases and start the next round of training process based on these received parameters. The FL process completes after the aggregation time reaches a preset number $T$ and the algorithm returns $\widetilde{\mathbf{w}}^{(T)}$.

Now, let us focus on the privacy preservation performance of the NbAFL. First, the set of all local parameters, denoted by $\widetilde{\mathbf{W}} = \{\widetilde{\mathbf{w}}_{1}, \ldots,\widetilde{\mathbf{w}}_{N}\}$, are received by the server. Owing to the local perturbations in the NbAFL, it will be difficult for malicious adversaries to infer the information at the $i$-client from its uploaded parameters $\widetilde{\mathbf{w}}_{i}$. After the model aggregation, the aggregated parameters $\mathbf{w}$ will be sent back to clients via broadcast channels. This poses threats on clients's privacy as potential adversaries may reveal sensitive information about individual clients from $\mathbf{w}$. In this case, additive noises may be posed to $\mathbf{w}$ based on \textbf{Theorem~\ref{theorem:DPforDownCh}}.
\section{Convergence Analysis on NbAFL}\label{sec:Con_NbAFL}
In this section, we are ready to analyze the convergence performance of the proposed NbAFL. First, we analyze the expected increment of adjacent aggregations in the loss function with Gaussian noises. Then, we focus on deriving the convergence property under the global $(\epsilon, \delta)$-DP requirement.

For the convenience of the analysis, we make the following assumptions on the loss function and network parameters.
\begin{assumption}\label{ass:LossFunction}
We make assumptions on the global loss function $F(\cdot)$ defined by $F(\cdot)\triangleq \sum_{i}^{N}p_iF_{i}(\cdot)$, and the $i$-th local loss function $F_{i}(\cdot)$ as follows:
\begin{enumerate}
\item[\emph{1)}] $F_{i}(\mathbf{w})$ is convex;
\item[\emph{2)}] $F_{i}(\mathbf{w})$ satisfies the Polyak-Lojasiewicz condition with the positive parameter $l$, which implies that
$F(\mathbf{w})-F(\mathbf{w}^{*})\leq\frac{1}{2l}\Vert \nabla F(\mathbf{w})\Vert^{2}$, where $\mathbf{w}^{*}$ is the optimal result;
\item[\emph{3)}] $F(\mathbf{w}^{(0)})-F(\mathbf{w}^{*}) = \Theta$;
\item[\emph{4)}] $F_{i}(\mathbf{w})$ is $\beta$-Lipschitz, i.e., $\Vert F_{i}(\mathbf{w})- F_{i}(\mathbf{w}')\Vert\leq \beta\Vert \mathbf{w}-\mathbf{w}'\Vert$, for any $\mathbf{w}$, $\mathbf{w}'$;
\item[\emph{5)}] $F_{i}(\mathbf{w})$ is $\rho$-Lipschitz smooth, i.e., $\Vert \nabla F_{i}(\mathbf{w})-\nabla F_{i}(\mathbf{w}')\Vert\leq \rho\Vert \mathbf{w}-\mathbf{w}'\Vert$, for any $\mathbf{w}$, $\mathbf{w}'$, where $\rho$ is a constant determined by the practical loss function;
\item[\emph{6)}] For any $i$ and $\mathbf{w}$, $\Vert\nabla F_{i}(\mathbf{w})-\nabla F(\mathbf{w})\Vert\leq \varepsilon_{i}$, where $\varepsilon_{i}$ is the divergence metric.
\end{enumerate}
\end{assumption}

Similar to the gradient divergence, the divergence metric $\varepsilon_{i}$ is the metric to capture the divergence between the gradients of the local loss functions and that of the aggregated loss function, which is essential for analyzing SGD. The divergence is related to how the data is distributed at different nodes. Using \textbf{Assumption~1}, we then have the following lemma.
\begin{lemma}[$B$-dissimilarity of various clients]\label{lem:Bdissimilarity}
For a given ML parameter $\mathbf{w}$, there exists $B$ satisfying
\begin{equation}
\mathbb{E}\left\{ \Vert\nabla F_{i}(\mathbf{w})\Vert^{2}\right\}\leq\Vert\nabla F(\mathbf{w})\Vert^{2}B^{2},\quad{\forall i}.
\end{equation}
\end{lemma}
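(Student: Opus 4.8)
The plan is to read this statement as the $B$-local-dissimilarity notion introduced in FedProx~\cite{DBLP:journals/corr/abs-1812-06127}. Since the batch size equals the local dataset size, each $\nabla F_i(\mathbf{w})$ is a deterministic full-batch gradient at the fixed point $\mathbf{w}$, so that $\mathbb{E}\{\|\nabla F_i(\mathbf{w})\|^2\}=\|\nabla F_i(\mathbf{w})\|^2$ and the $\forall i$ qualifier means we must exhibit a single finite constant $B$ (for the given $\mathbf{w}$) with $\|\nabla F_i(\mathbf{w})\|^2\le B^2\|\nabla F(\mathbf{w})\|^2$ for every client $i$. The whole content of the lemma is therefore the \emph{existence} of such a uniform ratio bound, and the natural route is to control the numerator from above and the denominator from below.

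First I would bound the per-client gradients uniformly: Assumption~\ref{ass:LossFunction}.4 ($\beta$-Lipschitzness of each $F_i$) gives $\|\nabla F_i(\mathbf{w})\|\le\beta$ for all $i$ and all $\mathbf{w}$. Then, whenever $\nabla F(\mathbf{w})\neq 0$, one may simply set
\begin{equation}
B^2=\max_{1\le i\le N}\frac{\|\nabla F_i(\mathbf{w})\|^2}{\|\nabla F(\mathbf{w})\|^2},
\end{equation}
which is finite by the previous bound and makes the claimed inequality hold by construction. To turn this into an assumption-driven estimate I would invoke the Polyak--Lojasiewicz condition (Assumption~\ref{ass:LossFunction}.2), namely $\|\nabla F(\mathbf{w})\|^2\ge 2l\,(F(\mathbf{w})-F(\mathbf{w}^*))$, which together with the $\beta$-bound yields the explicit estimate $B^2\le \beta^2/\big(2l\,(F(\mathbf{w})-F(\mathbf{w}^*))\big)$. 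As an alternative that uses the divergence metric of Assumption~\ref{ass:LossFunction}.6, the triangle inequality gives $\|\nabla F_i(\mathbf{w})\|\le\|\nabla F(\mathbf{w})\|+\varepsilon_i$, so $B$ may instead be taken as $1+\max_i\varepsilon_i/\|\nabla F(\mathbf{w})\|$; this makes transparent how data heterogeneity (larger $\varepsilon_i$) inflates the dissimilarity constant and how $B\to 1$ in the \emph{i.i.d.} limit $\varepsilon_i\to 0$.

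The main obstacle is the degenerate regime $\nabla F(\mathbf{w})=0$, where the right-hand side collapses while the local gradients need not vanish under heterogeneous data, so no finite $B$ can exist pointwise. I expect to dispose of this case through the Polyak--Lojasiewicz inequality itself: $\nabla F(\mathbf{w})=0$ forces $F(\mathbf{w})=F(\mathbf{w}^*)$, i.e. $\mathbf{w}$ is already globally optimal, at which point the subsequent convergence analysis has terminated and the bound is not invoked. Hence for every $\mathbf{w}$ with $F(\mathbf{w})>F(\mathbf{w}^*)$ the constant $B$ above is well defined and finite, which is precisely the range used in the convergence proof. The only remaining care is that $B$ is in general $\mathbf{w}$-dependent; since the lemma asserts existence only ``for a given $\mathbf{w}$'' this is acceptable, and the explicit bound $\beta^2/\big(2l\,(F(\mathbf{w})-F(\mathbf{w}^*))\big)$ further shows how $B$ can be made uniform over any sublevel set bounded away from the optimum.
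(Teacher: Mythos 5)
Your proposal is correct but follows a genuinely different route from the paper's. The paper reads the expectation as being over the client index $i$ drawn with weights $p_i$, so that $\nabla F(\mathbf{w})=\mathbb{E}\{\nabla F_i(\mathbf{w})\}$, and then applies the second-moment decomposition $\mathbb{E}\{\Vert\nabla F_i(\mathbf{w})\Vert^2\}=\Vert\nabla F(\mathbf{w})\Vert^2+\mathbb{E}\{\Vert\nabla F_i(\mathbf{w})-\nabla F(\mathbf{w})\Vert^2\}$; bounding the variance term by $\mathbb{E}\{\varepsilon_i^2\}$ via item 6) of Assumption~\ref{ass:LossFunction} yields the explicit FedProx-style constant $B(\mathbf{w})=\sqrt{1+\mathbb{E}\{\varepsilon_i^2\}/\Vert\nabla F(\mathbf{w})\Vert^2}$, after which $B$ is declared to be an upper bound of $B(\mathbf{w})$. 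You instead treat the expectation as degenerate and prove the stronger per-client statement, either tautologically via $B^2=\max_i\Vert\nabla F_i(\mathbf{w})\Vert^2/\Vert\nabla F(\mathbf{w})\Vert^2$ or via the triangle inequality, which gives $B=1+\max_i\varepsilon_i/\Vert\nabla F(\mathbf{w})\Vert$ --- a slightly looser constant than the paper's (max in place of mean, and $(1+x)^2\ge 1+x^2$), though your per-client bound of course implies the averaged one. What your argument buys is a more honest treatment of the two points the paper glosses over: the degenerate set $\nabla F(\mathbf{w})=0$, which you dispose of via the Polyak--Lojasiewicz condition, and the fact that $B(\mathbf{w})$ blows up as $\mathbf{w}\to\mathbf{w}^*$, so a single finite $B$ exists only on sublevel sets bounded away from the optimum --- your estimate $\beta^2/\bigl(2l(F(\mathbf{w})-F(\mathbf{w}^*))\bigr)$ makes this explicit, whereas the paper simply asserts that $B$ upper-bounds $B(\mathbf{w})$ without justification. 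What the paper's route buys is the sharper mean-square constant that reduces exactly to $B=1$ in the i.i.d.\ limit and matches the dissimilarity notion used downstream in the convergence analysis.
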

\begin{proof}
See Appendix~\ref{appendix:B_dissimilarity}.
\end{proof}

\textbf{Lemma~\ref{lem:Bdissimilarity}} comes from the assumption of the divergence metric and demonstrates the statistical heterogeneity of all clients. As mentioned earlier, the values of $\rho$ and $B(\mathbf{w})$ are determined by the specific global loss function $F(\mathbf{w})$ in practice and the training parameters $\mathbf{w}$. With the above preparation, we are now ready to analyze the convergence property of NbAFL. First, we present the following lemma to derive an expected increment bound on the loss function during each iteration of parameters with artificial noises.
\begin{lemma}[Expected increment in the loss function]\label{theorem:Expincrement}
After receiving updates, from the $t$-th to the $(t+1)$-th aggregation, the expected difference in the loss function can be upper-bounded by
\begin{multline}\label{equ:theorem2-1}
\mathbb{E}\{F(\widetilde{\mathbf{w}}^{(t+1)})-F(\widetilde{\mathbf{w}}^{(t)})\}\leq\lambda_{2}\mathbb{E}\{\Vert\nabla F(\widetilde{\mathbf{w}}^{(t)})\Vert^{2}\}\\
 + \lambda_{1}\mathbb{E}\{\Vert\mathbf{n}^{(t+1)}\Vert\Vert\nabla F(\widetilde{\mathbf{w}}^{(t)})\Vert\}+ \lambda_{0}\mathbb{E}\{\Vert\mathbf{n}^{(t+1)}\Vert^{2}\},
\end{multline}
where
\begin{equation}
\lambda_{0} = \frac{\rho}{2},\,\, \lambda_{1} = \frac{1}{\mu}+\frac{\rho B}{\mu},
\end{equation}
\begin{equation}
\lambda_{2}=-\frac{1}{\mu}+\frac{\rho B}{\mu^{2}}+\frac{\rho B^{2}}{2\mu^{2}},
\end{equation}
and $\mathbf{n}^{(t)}$ are the equivalent noises imposed on the parameters after the $t$-th aggregation, given by
\begin{equation}
\mathbf{n}^{(t)} = \sum_{i=1}^{N}{p_{i}\mathbf{n}_{i}^{(t)}}+\mathbf{n}^{(t)}_{\emph{D}}.
\end{equation}
\end{lemma}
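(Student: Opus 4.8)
The plan is to control the one-step change $F(\widetilde{\mathbf{w}}^{(t+1)})-F(\widetilde{\mathbf{w}}^{(t)})$ by invoking the $\rho$-Lipschitz smoothness of each $F_i$ (hence of $F$), which yields the quadratic upper bound
\begin{equation}
F(\widetilde{\mathbf{w}}^{(t+1)})\leq F(\widetilde{\mathbf{w}}^{(t)})+\langle\nabla F(\widetilde{\mathbf{w}}^{(t)}),\widetilde{\mathbf{w}}^{(t+1)}-\widetilde{\mathbf{w}}^{(t)}\rangle+\frac{\rho}{2}\Vert\widetilde{\mathbf{w}}^{(t+1)}-\widetilde{\mathbf{w}}^{(t)}\Vert^{2}.
\end{equation}
The coefficient $\lambda_0=\rho/2$ in the target inequality already signals that the noise term $\frac{\rho}{2}\Vert\mathbf{n}^{(t+1)}\Vert^{2}$ comes straight out of this smoothness term. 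So the first move is to write the update increment $\widetilde{\mathbf{w}}^{(t+1)}-\widetilde{\mathbf{w}}^{(t)}$ as a ``clean'' part (the effect of the local proximal minimization steps) plus the aggregated noise $\mathbf{n}^{(t+1)}=\sum_i p_i\mathbf{n}_i^{(t+1)}+\mathbf{n}^{(t+1)}_{\text{D}}$.

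Next I would analyze the clean part. Each client solves the proximal subproblem $\arg\min_{\mathbf{w}_i}\{F_i(\mathbf{w}_i)+\frac{\mu}{2}\Vert\mathbf{w}_i-\mathbf{w}^{(t)}\Vert^2\}$, so the first-order optimality condition gives $\nabla F_i(\mathbf{w}_i^{(t+1)})+\mu(\mathbf{w}_i^{(t+1)}-\mathbf{w}^{(t)})=0$, i.e. $\mathbf{w}_i^{(t+1)}-\mathbf{w}^{(t)}=-\frac{1}{\mu}\nabla F_i(\mathbf{w}_i^{(t+1)})$. Aggregating with weights $p_i$ and summing, the noise-free step is essentially a scaled gradient-type direction. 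The plan is to substitute this into the inner-product term of the smoothness bound, approximate $\nabla F_i$ at the updated point by $\nabla F_i(\widetilde{\mathbf{w}}^{(t)})$ using the $\rho$-smoothness again (introducing the extra $\rho/\mu$ and $\rho/\mu^2$ factors visible in $\lambda_1,\lambda_2$), and then invoke \textbf{Lemma~\ref{lem:Bdissimilarity}} to replace $\mathbb{E}\Vert\nabla F_i(\widetilde{\mathbf{w}}^{(t)})\Vert^2$ by $B^2\Vert\nabla F(\widetilde{\mathbf{w}}^{(t)})\Vert^2$. This is what produces the $\rho B/\mu$, $\rho B/\mu^2$, and $\rho B^2/(2\mu^2)$ pieces in $\lambda_1$ and $\lambda_2$, together with the leading $-1/\mu$ descent term in $\lambda_2$ coming from the gradient self-inner-product.

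After the clean part is bounded, the cross terms between the gradient direction and the noise must be handled. Expanding $\Vert\widetilde{\mathbf{w}}^{(t+1)}-\widetilde{\mathbf{w}}^{(t)}\Vert^2$ and the inner product produces terms of the form $\langle\nabla F(\widetilde{\mathbf{w}}^{(t)}),\mathbf{n}^{(t+1)}\rangle$ and products of gradient magnitudes with $\Vert\mathbf{n}^{(t+1)}\Vert$; bounding these by Cauchy--Schwarz gives the $\mathbb{E}\{\Vert\mathbf{n}^{(t+1)}\Vert\,\Vert\nabla F(\widetilde{\mathbf{w}}^{(t)})\Vert\}$ term with coefficient $\lambda_1=\frac{1}{\mu}+\frac{\rho B}{\mu}$. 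Collecting the three groups (pure gradient-squared, gradient-times-noise, and noise-squared) then yields exactly the stated form. I expect the main obstacle to be the careful bookkeeping of the approximation $\nabla F_i(\mathbf{w}_i^{(t+1)})\approx\nabla F_i(\widetilde{\mathbf{w}}^{(t)})$: one must track how the $\rho$-smoothness error propagates through the aggregation and how $B$-dissimilarity is applied to the right gradient norms, since a sloppy application could move the $\rho B$ factors between $\lambda_1$ and $\lambda_2$ or misplace the descent term $-1/\mu$. Verifying that these error terms aggregate precisely into the given $\lambda_1,\lambda_2$ (rather than leaving residual higher-order terms) is the delicate step, and it is where the convexity and Polyak--Lojasiewicz assumptions from \textbf{Assumption~\ref{ass:LossFunction}} are likely needed to discard or absorb the leftover contributions.
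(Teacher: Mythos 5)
Your plan follows essentially the same route as the paper's proof: the $\rho$-smoothness descent inequality, the first-order optimality condition of the proximal subproblem to express the step as $-\frac{1}{\mu}\nabla F_i(\mathbf{w}_i^{(t+1)})$ plus the aggregated noise, smoothness again to relate $\nabla F_i(\mathbf{w}_i^{(t+1)})$ to $\nabla F_i(\widetilde{\mathbf{w}}^{(t)})$, the $B$-dissimilarity lemma, and Cauchy--Schwarz on the cross terms (the paper merely carries an inexactness parameter $\theta$ and a modified modulus $\overline{\mu}=\mu-\rho$ through the bookkeeping before setting $\theta=0$ and $\overline{\mu}=\mu$ in the convex case). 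One small correction: the Polyak--Lojasiewicz condition is not used in this lemma at all---it enters only in \textbf{Theorem~\ref{theorem:ConvUpperBound}}---so there are no leftover contributions for it to absorb here.
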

\begin{proof}
See Appendix~\ref{appendix:ConvforN}.
\end{proof}

In this lemma, the value of an additive noise sample $n$ in vector $\mathbf{n}^{(t)}$ satisfies the following Gaussian distribution $n \sim \mathcal N(0,\sigma_\text{A}^{2})$. Also, we can obtain $\sigma_{\text{A}} = \sqrt{\sigma_{\text{D}}^{2}+\sigma_{\text{U}}^{2}/N}$ from Section~\ref{sec:FLDP}. From the right hand side (RHS) of the above inequality, we can see that it is crucial to select a proper proximal term $\mu$ to achieve a low upper-bound. It is clear that artificial noises with a large $\sigma_\text{A}$ may improve the DP performance in terms privacy protection. However, from the RHS of (\ref{equ:theorem2-1}), a large $\sigma_\text{A}$ may enlarge the expected difference of the loss function between two consecutive aggregations, leading to a deterioration of convergence performance.

Furthermore, to satisfy the global $(\epsilon, \delta)$-DP, by using~\textbf{Theorem~\ref{theorem:DPforDownCh}}, we have
\begin{equation}\label{equ:theorem2-2}
\sigma_{\text{A}} =
\begin{cases}
\frac{cT\Delta s_{\text{D}}}{\epsilon} & T>L\sqrt{N},\\
\frac{cL\Delta s_{\text{U}}}{\sqrt{N}\epsilon}&T\leq L\sqrt{N}.
\end{cases}
\end{equation}
Next, we will analyze the convergence property of NbAFL with the $(\epsilon,\delta)$-DP requirement.

\begin{theorem}[Convergence upper bound of the NbAFL]\label{theorem:ConvUpperBound}
With required protection level $\epsilon$, the convergence upper bound of \textbf{\emph{Algorithm~\ref{alg:NbAFL}}} after $T$ aggregations is given by
\begin{multline}\label{equ:Upper_bound_DP}
\mathbb{E}\{F(\widetilde{\mathbf{w}}^{(T)})-F(\mathbf{w}^{*})\}
\leq \\P^{T}\Theta+\left(\frac{\kappa_{1}T}{\epsilon}+\frac{\kappa_{0}T^{2}}{\epsilon^{2}}\right)\left(1-P^T\right),
\end{multline}
where
\begin{equation}
P = 1+2l\lambda_{2}, \,\,\kappa_{1} = \frac{\lambda_{1}\beta c}{m(1-P)}\sqrt{\frac{2}{N\pi}}
\end{equation}
and
\begin{equation}
\kappa_{0} = \frac{\lambda_{0}c^{2}}{m^{2}(1-P)N}.
\end{equation}
\end{theorem}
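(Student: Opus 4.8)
The plan is to convert the single-step bound of \textbf{Lemma~\ref{theorem:Expincrement}} into a scalar linear recursion for the expected optimality gap $e_t \triangleq \mathbb{E}\{F(\widetilde{\mathbf{w}}^{(t)})-F(\mathbf{w}^{*})\}$, unroll it over the $T$ aggregations, and sum the resulting geometric series. First I would rewrite the left-hand side of \eqref{equ:theorem2-1} as $e_{t+1}-e_t$ and invoke the Polyak--Lojasiewicz condition from \textbf{Assumption~\ref{ass:LossFunction}} in the form $\Vert\nabla F(\mathbf{w})\Vert^{2}\ge 2l\bigl(F(\mathbf{w})-F(\mathbf{w}^{*})\bigr)$. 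The key observation is that the coefficient $\lambda_{2}=-\tfrac{1}{\mu}+\tfrac{\rho B}{\mu^{2}}+\tfrac{\rho B^{2}}{2\mu^{2}}$ is \emph{negative} for a sufficiently large proximal constant $\mu$, so multiplying the PL lower bound by $\lambda_{2}$ reverses the inequality and yields $\lambda_{2}\mathbb{E}\{\Vert\nabla F(\widetilde{\mathbf{w}}^{(t)})\Vert^{2}\}\le 2l\lambda_{2}e_t$. Substituting this into \eqref{equ:theorem2-1} gives $e_{t+1}\le P\,e_t+R^{(t+1)}$ with $P=1+2l\lambda_{2}$ and $R^{(t+1)}$ collecting the two noise terms.

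Next I would bound the noise residual $R^{(t+1)}$ uniformly in $t$. Because each $F_i$ is $\beta$-Lipschitz (item~4 of \textbf{Assumption~\ref{ass:LossFunction}}), the gradient norm satisfies $\Vert\nabla F(\widetilde{\mathbf{w}}^{(t)})\Vert\le\beta$ deterministically, which decouples the cross term: $\mathbb{E}\{\Vert\mathbf{n}^{(t+1)}\Vert\,\Vert\nabla F(\widetilde{\mathbf{w}}^{(t)})\Vert\}\le\beta\,\mathbb{E}\{\Vert\mathbf{n}^{(t+1)}\Vert\}$. I would then insert the Gaussian moment relations $\mathbb{E}\{\Vert\mathbf{n}^{(t+1)}\Vert\}=\sqrt{2/\pi}\,\sigma_{\text{A}}$ and $\mathbb{E}\{\Vert\mathbf{n}^{(t+1)}\Vert^{2}\}=\sigma_{\text{A}}^{2}$, so that $R^{(t+1)}\le\lambda_{1}\beta\sqrt{2/\pi}\,\sigma_{\text{A}}+\lambda_{0}\sigma_{\text{A}}^{2}$. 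Since the scale $\sigma_{\text{A}}$ given in \eqref{equ:theorem2-2} depends only on $T$ and not on the running index $t$, this per-step residual is constant across aggregations, which is precisely what makes the summation tractable.

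Finally I would unroll the recursion to obtain $e_T\le P^{T}e_0+R\sum_{k=0}^{T-1}P^{k}=P^{T}\Theta+R\,\tfrac{1-P^{T}}{1-P}$, using $e_0=\Theta$ from item~3 of \textbf{Assumption~\ref{ass:LossFunction}} and the closed form of the geometric series (valid since $0<P<1$). Substituting the $T>L\sqrt{N}$ branch of \eqref{equ:theorem2-2}, in which $\sigma_{\text{A}}\propto T/\epsilon$, the term linear in $\sigma_{\text{A}}$ produces a contribution proportional to $T/\epsilon$ and the quadratic term produces one proportional to $T^{2}/\epsilon^{2}$, each carrying the factor $(1-P^{T})/(1-P)$; collecting the constants then identifies $\kappa_{1}$ and $\kappa_{0}$ as stated.

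I expect the main obstacle to be the careful bookkeeping of inequality directions: the PL step is useful only because $\lambda_{2}<0$, and one must simultaneously ensure $P=1+2l\lambda_{2}\in(0,1)$ so that both $P^{T}\to0$ and the geometric sum are legitimate; this pins down an admissible range for $\mu$. A secondary technical point is justifying the decoupling of the cross term $\mathbb{E}\{\Vert\mathbf{n}^{(t+1)}\Vert\,\Vert\nabla F(\widetilde{\mathbf{w}}^{(t)})\Vert\}$, where the deterministic Lipschitz bound $\Vert\nabla F\Vert\le\beta$ sidesteps any delicate correlation argument between the injected noise and the perturbed iterate.
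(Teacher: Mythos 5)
Your proposal follows essentially the same route as the paper's Appendix D: subtract $F(\mathbf{w}^{*})$ from the bound of \textbf{Lemma~\ref{theorem:Expincrement}}, use the Polyak--Lojasiewicz inequality together with the sign condition $\lambda_{2}<0$ to absorb the gradient term into $P\,e_{t}$ with $P=1+2l\lambda_{2}$, decouple the cross term via the deterministic bound $\Vert\nabla F(\widetilde{\mathbf{w}}^{(t)})\Vert\le\beta$, and unroll the resulting affine recursion into a geometric sum with $e_{0}=\Theta$. All of that matches the paper step for step, including the observation that the residual is constant in $t$ because $\sigma_{\text{A}}$ in \eqref{equ:theorem2-2} depends only on $T$.

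The one place your argument would not land on the stated theorem is the Gaussian moment step. You take $\mathbb{E}\{\Vert\mathbf{n}\Vert\}=\sqrt{2/\pi}\,\sigma_{\text{A}}$ and $\mathbb{E}\{\Vert\mathbf{n}\Vert^{2}\}=\sigma_{\text{A}}^{2}$, i.e., the moments of a single Gaussian coordinate. The paper instead uses $\mathbb{E}\{\Vert\mathbf{n}\Vert\}=\frac{\Delta s_{\text{D}}Tc}{\epsilon}\sqrt{2N/\pi}$ and $\mathbb{E}\{\Vert\mathbf{n}\Vert^{2}\}=\frac{\Delta s_{\text{D}}^{2}T^{2}c^{2}N}{\epsilon^{2}}$, i.e., it treats $\mathbf{n}$ as a vector of $N$ i.i.d.\ coordinates so that an extra $\sqrt{N}$ (resp.\ $N$) appears. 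Combined with $\Delta s_{\text{D}}=2C/(mN)$, this is exactly what produces $\kappa_{1}\propto\frac{1}{m}\sqrt{2/(N\pi)}$ and $\kappa_{0}\propto\frac{1}{m^{2}N}$: the $1/\sqrt{N}$ in $\kappa_{1}$ arises as $\sqrt{N}\cdot(1/N)$. With your per-coordinate moments you would instead obtain $\kappa_{1}\propto 1/(mN)$ and $\kappa_{0}\propto 1/(m^{2}N^{2})$, which is a different $N$-dependence from the theorem's statement (and would change \textbf{Remark~\ref{remark:ConvforN_N}} quantitatively). To reproduce the claimed constants you need to carry the dimension factor of the noise vector through the moment computation as the paper does. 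The remaining points you flag (the admissible range of $\mu$ guaranteeing $P\in(0,1)$, and the legitimacy of the cross-term decoupling) are handled in the paper exactly as you propose.
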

\begin{proof}
See Appendix D.
\end{proof}

\textbf{Theorem~\ref{theorem:ConvUpperBound}} reveals an important relationship between privacy and utility by taking into account the protection level $\epsilon$ and the number of aggregation times $T$.
As the number of aggregation times $T$ increases, the first term of the upper bound decreases but the second term increases. Furthermore, By viewing $T$ as a continuous variable and by writing the RHS of (\ref{equ:Upper_bound_DP}) as $h(T)$, we have
\begin{multline}\label{equ:second_order_derivative}
\frac{d^{2}h(T)}{d^{2}T} =\left(\Theta-\frac{\kappa_{1}T}{\epsilon}-\frac{\kappa_{0}T^{2}}{\epsilon^{2}}\right)P^{T}\ln^{2}{P}\\
-2\left(\frac{\kappa_{1}}{\epsilon}+\frac{2\kappa_{0}T}{\epsilon^{2}}\right)P^T\ln{P}+\frac{2\kappa_{0}}{\epsilon^{2}}\left(1-P^T\right).
\end{multline}
It can be seen that the second term and third term of on the RHS of (\ref{equ:second_order_derivative}) are always positive. When $N$ and $\epsilon $ are set to be large enough, we can see that $\kappa_{1}$ and $\kappa_{0}$ are small, and thus the first term can also be positive. In this case, we have $d^{2}h(T)/d^{2}T>0$ and the upper bound is convex for $T$.

\begin{remark}\label{remark:ConvforN_epsilon}
As can be seen from this theorem, expected gap between the achieved loss function $F(\widetilde {\mathbf{ w}}^{(T)})$ and the minimum one $F(\mathbf{w}^{*})$ is a decreasing function of $\epsilon$. By increasing $\epsilon$, i.e., relaxing the privacy protection level, the performance of NbAFL algorithm will improve. This is reasonable because the variance of artificial noises decreases, thereby improving the convergence performance.
\end{remark}

\begin{remark}\label{remark:ConvforN_N}
The number of clients $N$ will also affect its iterative convergence performance, i.e., a larger $N$ would achieve a better convergence performance. This is because a lager $N$ leads to a lower variance of the artificial noises.
\end{remark}

\begin{remark}\label{remark:ConvforN_T}
There is an optimal number of maximum aggregation times $T$ in terms of convergence performance for given $\epsilon$ and $N$. In more detail, a larger $T$ may lead to a higher variance of artificial noises, and thus pose a negative impact on convergence performance. On the other hand, more iterations can generally boost the convergence performance if noises are not large enough. In this sense, there is a tradeoff on choosing a proper $T$.
\end{remark}
\section{$K$-Client Random Scheduling Policy}\label{Sec:K-Random}
In this section, we consider the case where only $K (K<N)$ clients are selected to participate in the aggregation process, namelly $K$-random scheduling.

We now discuss how to add artificial noises in the $K$-random scheduling to satisfy a global $(\epsilon, \delta)$-DP. It is nature that in the uplink channels, each of the $K$ scheduled clients should add noises with scale $\sigma_{\text{U}}=cL\Delta s_{\text{U}}/\epsilon$ for achieving $(\epsilon, \delta)$-DP. This is equivalent to the noise scale in the all-clients selection case in~\textbf{Section~\ref{sec:FLDP}}, since each client only considers its own privacy for uplink channels in both cases. However, the derivation of the noise scale in the downlink will be different for the $K$-random scheduling. As an extension of \textbf{Theorem~1}, we present the following lemma in the case of $K$-random scheduling on how to obtain $\sigma_{\text{D}}$.
\begin{lemma}[DP guarantee in $K$-random scheduling]\label{lem:NoiseKSche}
In the NbAFL algorithm with $K$-random scheduling, to satisfy a global $(\epsilon, \delta)$-DP, and the standard deviation $\sigma_{\emph{D}}$ of additive Gaussian noises for downlink channels should be set as
\begin{equation}
\begin{aligned}
\sigma_{\emph{D}}=
\begin{cases}
\frac{2cC\sqrt{\frac{T^{2}}{b^{2}}-L^{2}K}}{mK\epsilon} & T>\frac{\epsilon}{\gamma},\\
0&T\leq \frac{\epsilon}{\gamma},
\end{cases}
\end{aligned}
\end{equation}
where
\begin{equation}
\begin{split}
&b=-\frac{T}{\epsilon}\ln\left(1-\frac{N}{K}+\frac{N}{K}e^{\frac{-\epsilon}{T}}\right),\\
&\gamma = -\ln\left({1-\frac{K}{N}+ \frac{K}{N}e^{\frac{-\epsilon}{L\sqrt{K}}}}\right).
\end{split}
\end{equation}
\end{lemma}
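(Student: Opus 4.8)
The plan is to reduce the argument to the variance-accounting framework already used for \textbf{Theorem~\ref{theorem:DPforDownCh}}, and then insert one new ingredient: the privacy amplification produced by selecting only $K$ of the $N$ clients at random in each round. As in the all-clients case, I would first record the two quantities that enter the Gaussian mechanism. Because the aggregation now averages over the $K$ scheduled clients with weights $p_i = 1/K$, the downlink sensitivity with respect to a single client becomes $\Delta s_{\text{D}} = 2C/(mK)$, while the uplink noise each selected client injects is unchanged, $\sigma_{\text{U}} = cL\Delta s_{\text{U}}/\epsilon = 2cCL/(m\epsilon)$, since a client still protects only its own data on the uplink. Propagating the per-client uplink noise through the $K$-term average and adding the server noise gives the effective noise on the broadcast parameter, $\sigma_{\text{A}} = \sqrt{\sigma_{\text{D}}^{2} + \sigma_{\text{U}}^{2}/K}$. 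The lemma will follow once I determine how large $\sigma_{\text{A}}$ must be to meet the global $(\epsilon,\delta)$-DP requirement over $T$ broadcasts, and then solve for $\sigma_{\text{D}}$.

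The new and most delicate step is the subsampling amplification. In each round a fixed client is scheduled with probability $q = K/N$; when it is \emph{not} scheduled the broadcast output is independent of its data, and when it \emph{is} scheduled the Gaussian mechanism on the selected subset provides some per-round budget $\beta$. I would invoke a subsampling-amplification bound to convert the budget $\beta$ spent on the selected subset into the smaller budget $\alpha$ actually observed by an adversary, in the form
\begin{equation}\label{equ:amplify}
e^{-\alpha} = 1 - q + q e^{-\beta} = 1 - \frac{K}{N}\left(1 - e^{-\beta}\right).
\end{equation}
Establishing \eqref{equ:amplify} rigorously for the $K$-out-of-$N$ selection together with the single-sample adjacency is the main obstacle: one must condition on whether the differing client lies in the scheduled set, treat the broadcast as a mixture of the ``client-absent'' and ``client-present'' output laws, and control the worst-case privacy loss of that mixture. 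This is where the $e^{-\epsilon}$-type factors in $b$ and $\gamma$ originate, and it is the place where care about the adjacency/subsampling interaction is essential.

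With \eqref{equ:amplify} in hand the rest is bookkeeping. Treating the $T$ broadcasts by the same linear composition used for \textbf{Theorem~\ref{theorem:DPforDownCh}}, the observed per-round budget must satisfy $T\alpha = \epsilon$, i.e. $\alpha = \epsilon/T$. Substituting $\alpha = \epsilon/T$ into \eqref{equ:amplify} and solving for the required underlying budget gives $\beta = -\ln\!\left(1 - \frac{N}{K} + \frac{N}{K}e^{-\epsilon/T}\right)$, so that $b = T\beta/\epsilon$ is exactly the quantity in the statement and reduces to $b = 1$ when $K = N$. Because a single-round Gaussian mechanism with sensitivity $\Delta s_{\text{D}}$ and budget $\beta$ needs $\sigma_{\text{A}} = c\,\Delta s_{\text{D}}/\beta = cT\Delta s_{\text{D}}/(b\epsilon) = 2cCT/(mKb\epsilon)$, I would finally subtract the uplink variance, $\sigma_{\text{D}}^{2} = \sigma_{\text{A}}^{2} - \sigma_{\text{U}}^{2}/K$, to obtain
\begin{equation}
\sigma_{\text{D}} = \frac{2cC}{mK\epsilon}\sqrt{\frac{T^{2}}{b^{2}} - L^{2}K},
\end{equation}
which is the claimed expression.

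It remains to pin down when no server noise is needed. Setting $\sigma_{\text{D}} = 0$ corresponds to the uplink noise alone carrying the downlink guarantee, i.e. $\sigma_{\text{A}} = \sigma_{\text{U}}/\sqrt{K}$. Feeding this $\sigma_{\text{A}}$ back through $\beta = c\,\Delta s_{\text{D}}/\sigma_{\text{A}}$ gives an underlying per-round budget $\epsilon/(L\sqrt{K})$, and applying \eqref{equ:amplify} with this value produces the observed per-round budget $\gamma = -\ln\!\left(1 - \frac{K}{N} + \frac{K}{N}e^{-\epsilon/(L\sqrt{K})}\right)$. The uplink noise suffices precisely when its compounded budget over $T$ rounds does not exceed $\epsilon$, i.e. $T\gamma \le \epsilon$, which is the stated threshold $T \le \epsilon/\gamma$; one checks that it collapses to the $T \le L\sqrt{N}$ of \textbf{Theorem~\ref{theorem:DPforDownCh}} when $K = N$. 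I expect the only genuinely hard part to be the justification of \eqref{equ:amplify}; every other step mirrors the all-clients proof.
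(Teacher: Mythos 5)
Your proposal follows essentially the same route as the paper's proof: the subsampling relation $e^{-\alpha}=1-q+qe^{-\beta}$ that you single out as the key ingredient is exactly what the paper obtains by writing each round's output as the mixture of the client-absent and client-present Gaussian laws and bounding the log-ratio, and the remaining steps (linear composition $T\alpha=\epsilon$, calibrating $\sigma_{\text{A}}=cT\Delta s_{\text{D}}/(b\epsilon)$ via the Gaussian tail bound, subtracting the uplink variance $\sigma_{\text{U}}^{2}/K$, and recovering $\gamma$ from the $\sigma_{\text{D}}=0$ boundary) match the paper's derivation step for step. The mixture computation you flag as the main obstacle is carried out in the paper at roughly the level of detail you sketch, so your plan is complete and consistent with the published argument.
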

\begin{IEEEproof}
See Appendix~\ref{appendix:DPforKshe}.
\end{IEEEproof}

\textbf{Lemma~\ref{lem:NoiseKSche}} recalculates $\sigma_{\text{D}}$ by considering the number of chosen clients $K$.
Generally, the number of clients $N$ is fixed, we thus focus on the effect of $K$.
Based on the DP analysis in~\textbf{Lemma~\ref{lem:NoiseKSche}}, we can obtain the following theorem.
\begin{theorem}[Convergence under $K$-random scheduling]\label{theorem:ConvKSche}
With required protection level $\epsilon$ and the number of chosen clients $K$, for any $\Theta>0$, the convergence upper bound after $T$ aggregation times is given by
\begin{equation}\label{equ:theorem-3}
\begin{aligned}
&\mathbb{E}\{F(\widetilde{\mathbf{v}}^{T})-F(\mathbf{w}^{*})\}
\leq Q^{T}\Theta \\
&\quad+ \frac{1-Q^{T}}{1-Q}\left(\frac{c\alpha_{1}\beta}{-mK\ln\left(1-\frac{N}{K}+\frac{N}{K}e^{-\frac{\epsilon}{T}}\right)}\sqrt{\frac{2}{\pi}}\right.\\
&\quad+\left.\frac{c^{2}\alpha_{0}}{m^{2}K^{2}\ln^{2}\left(1-\frac{N}{K}+\frac{N}{K}e^{-\frac{\epsilon}{T}}\right)}\right).
\end{aligned}
\end{equation}
where
\begin{equation}
Q=1+\frac{2l}{\mu^{2}}\left(\frac{\rho B^{2}}{2}+\rho B+\frac{\rho B^{2}}{K}+\frac{2\rho B^{2}}{\sqrt{K}}+\frac{\mu B}{\sqrt{K}}-\mu\right),
\end{equation}
\begin{multline}
\alpha_{0} = \frac{2\rho K}{N}+\rho,\,\,\alpha_{1} = 1+\frac{2\rho B}{\mu}+\frac{2\rho B\sqrt{K}}{\mu N},
\end{multline}
and
\begin{equation}
\widetilde{\mathbf{v}}^{(T)}=\sum_{i=1}^{K}{p_{i}\left(\mathbf{w}^{(T)}_{i}+\mathbf{n}^{(T)}_{i}\right)}+\mathbf{n}^{(T)}_{\emph{D}}.
\end{equation}
\end{theorem}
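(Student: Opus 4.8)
The plan is to mirror the two-stage argument behind \textbf{Theorem~\ref{theorem:ConvUpperBound}} — derive a one-aggregation expected-descent recursion and then unroll it as a geometric series — with two changes tailored to $K$-random scheduling: the descent step must additionally average over the uniformly random size-$K$ set of scheduled clients, and the injected noise magnitude must be taken from \textbf{Lemma~\ref{lem:NoiseKSche}} in place of \textbf{Theorem~\ref{theorem:DPforDownCh}}.

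First I would establish a per-round increment bound analogous to \textbf{Lemma~\ref{theorem:Expincrement}}, now written for $\widetilde{\mathbf{v}}^{(t)}$ and carrying an outer expectation over the random choice of the $K$ participants. Expanding $F(\widetilde{\mathbf{v}}^{(t+1)})-F(\widetilde{\mathbf{v}}^{(t)})$ via the $\rho$-Lipschitz smoothness of $F$ (\textbf{Assumption~\ref{ass:LossFunction}}) together with the proximal update rule splits the increment into a gradient term, a gradient-noise cross term, and a pure noise term. The essential new feature is that the aggregate over a random $K$-subset is a biased and higher-variance estimate of the full weighted average; bounding the bias through $\beta$-Lipschitzness and the second moment through the $B$-dissimilarity of \textbf{Lemma~\ref{lem:Bdissimilarity}} generates the extra sampling contributions $\frac{\rho B^{2}}{K}$, $\frac{2\rho B^{2}}{\sqrt{K}}$ and $\frac{\mu B}{\sqrt{K}}$ that separate the coefficient $Q$ from the all-client coefficient $P$, and likewise produces the $K$-dependent noise coefficients $\alpha_{0}$ and $\alpha_{1}$. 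Applying the Polyak-Lojasiewicz condition to replace $\mathbb{E}\{\Vert\nabla F\Vert^{2}\}$ by $2l\,\mathbb{E}\{F(\widetilde{\mathbf{v}}^{(t)})-F(\mathbf{w}^{*})\}$ then yields a constant-coefficient linear recursion $\mathbb{E}\{F(\widetilde{\mathbf{v}}^{(t+1)})-F(\mathbf{w}^{*})\}\leq Q\,\mathbb{E}\{F(\widetilde{\mathbf{v}}^{(t)})-F(\mathbf{w}^{*})\}+R$, with $R$ collecting the two noise terms.

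Next I would insert the noise standard deviation from \textbf{Lemma~\ref{lem:NoiseKSche}}. In the active regime $T>\epsilon/\gamma$ the effective per-coordinate scale combines $\sigma_{\text{D}}$ with $\sigma_{\text{U}}/\sqrt{K}$, so that the linear noise contribution scales as $\frac{T}{b\epsilon}$ and the quadratic one as $\bigl(\frac{T}{b\epsilon}\bigr)^{2}$; the Gaussian first-absolute-moment supplies the $\sqrt{2/\pi}$ factor attached to $\alpha_{1}$. Using the defining identity $\frac{T}{b\epsilon}=-1/\ln\!\bigl(1-\frac{N}{K}+\frac{N}{K}e^{-\epsilon/T}\bigr)$ rewrites $R$ exactly in terms of the logarithmic subsampling factor displayed in the statement. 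Finally, unrolling the constant-coefficient recursion across $T$ aggregations gives $\mathbb{E}\{F(\widetilde{\mathbf{v}}^{(T)})-F(\mathbf{w}^{*})\}\leq Q^{T}\Theta+\frac{1-Q^{T}}{1-Q}R$, which is the asserted bound.

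The main obstacle will be the subsampling analysis inside the per-round descent step, namely computing the conditional expectation over the uniformly random $K$-subset of both $\mathbb{E}\{\Vert\nabla F(\widetilde{\mathbf{v}}^{(t)})\Vert^{2}\}$ and the gradient-noise cross term. The delicate point is that the selection fluctuation enters the first-moment and cross terms at order $1/\sqrt{K}$ — a standard-deviation-type estimate obtained via Cauchy-Schwarz and Jensen — while the pure selection variance enters at order $1/K$; tracking these two orders correctly is precisely what produces the exact constants $Q$, $\alpha_{0}$ and $\alpha_{1}$, whose $1/\sqrt{K}$ and $1/K$ pieces decay as more clients participate and thereby quantify the statistical price of scheduling only $K$ out of $N$ clients.
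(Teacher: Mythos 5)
Your proposal follows essentially the same route as the paper's proof in Appendix~\ref{appendix:ConvforK}: a per-round descent bound that augments the analysis of \textbf{Lemma~\ref{theorem:Expincrement}} with the first- and second-moment fluctuations of the random $K$-subset average (entering at order $1/\sqrt{K}$ and $1/K$ respectively, exactly as you describe), followed by the Polyak-Lojasiewicz step, geometric unrolling of the resulting linear recursion, and substitution of the noise moments from \textbf{Lemma~\ref{lem:NoiseKSche}} via the identity defining $b$. One small correction: the paper shows the $K$-subset average is an \emph{unbiased} estimator of the full aggregate (its conditional expectation equals $\mathbf{w}^{(t+1)}+\mathbf{n}^{(t+1)}$), so the extra terms in $Q$, $\alpha_{0}$, $\alpha_{1}$ arise purely from the sampling fluctuation, not from any bias.
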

\begin{proof}
See Appendix~\ref{appendix:ConvforK}.
\end{proof}

The above theorem provides the convergence upper bound between $F(\widetilde{\mathbf{v}}^{T})$ and $F(\mathbf{w}^{*})$ under $K$-random scheduling.
Using $K$-random scheduling, we can obtain an important relationship between privacy and utility by taking into account the protection level $\epsilon$, the number of aggregation times $T$ and the number of chosen clients $K$.

\begin{remark}\label{remark:ConvforK_K}
From the bound derived in~\emph{\textbf{Theorem~\ref{theorem:ConvKSche}}}, we conclude that there is an optimal $K$ in between $0$ and $N$ that achieves the optimal convergence performance. That is, by finding a proper $K$, the $K$-random scheduling policy is superior to the one that all $N$ clients participate in the FL aggregations.
\end{remark}

\section{Simulation Results}\label{Sec:Exm_Res}
In this section, we evaluate the proposed NbAFL by using multi-layer perception (MLP) and real-world federated datasets.
In order to characterize the convergence property of NbAFL, we conduct experiments by varying the protection levels of $\epsilon$, the number of clients $N$, the number of maximum aggregation times $T$ and the number of chosen clients $K$.

We conduct experiments on the standard MNIST dataset for handwritten digit recognition consisting of $60000$ training examples and $10000$ testing examples~\cite{726791}.
Each example is a $28 \times 28$ size gray-level image.
Our baseline model uses a a MLP network with a single hidden layer containing 256 hidden units.
In this feed-forward neural network, we use a ReLU units and softmax of $10$ classes (corresponding to the $10$ digits) with the cross-entropy loss function.
For the optimizer of networks, we set the learning rate to $0.002$.
The values of $\rho$, $\beta$, $l$ and $B$ are determined by the specific loss function, and we will use estimated values in our simulations~\cite{8664630}.

\subsection{Performance Evaluation on Protection Levels}
\begin{figure}[htb]
\centering
\includegraphics[width=3.5in,angle=0]{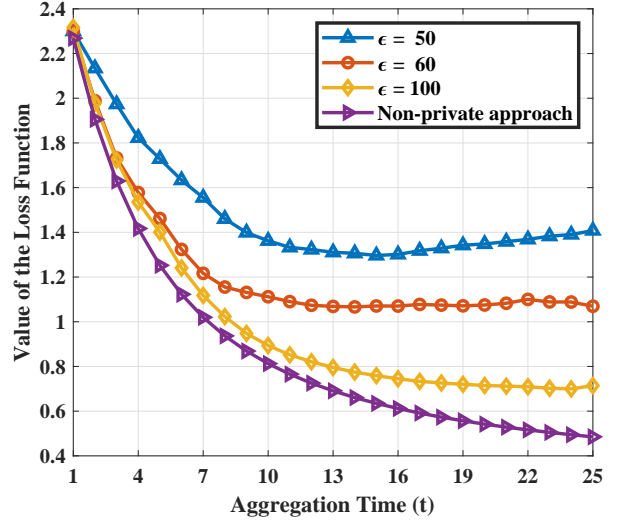}
\caption{The comparison of training loss with various protection levels for 50 clients using $\epsilon = 50$, $\epsilon = 60$ and $\epsilon = 100$, respectively.}
\label{fig:ConvforN_epsilon_loss}
\end{figure}
\begin{figure}[htb]
\centering
\includegraphics[width=3.5in,angle=0]{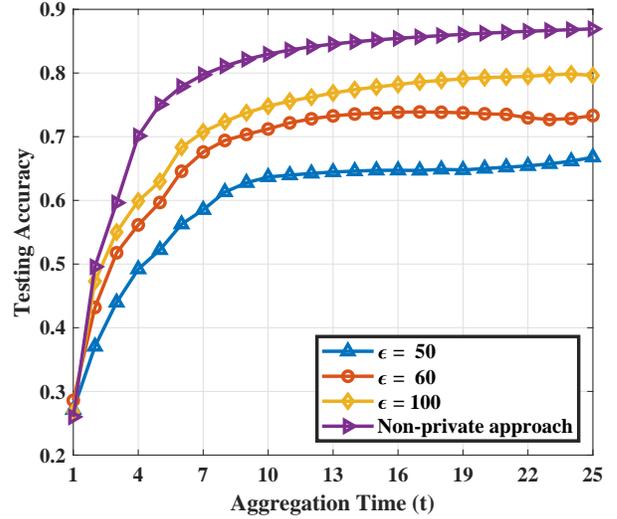}
\caption{The comparison of training accuracy with various protection levels for 50 clients using $\epsilon = 50$, $\epsilon = 60$ and $\epsilon = 100$, respectively.}
\label{fig:ConvforN_epsilon_acc}
\end{figure}
In Fig.~\ref{fig:ConvforN_epsilon_loss} and Fig.~\ref{fig:ConvforN_epsilon_acc}, we choose various protection levels $\epsilon=50$, $\epsilon=60$ and $\epsilon=100$ to show the results of the loss function and testing accuracies in NbAFL.
Furthermore, we also include a non-private approach to compare with our NbAFL.
In this experiment, we set $N = 50$, $T = 25$ and $\delta = 0.01$, and compute the values of the loss function as a function of the aggregation times $T$.
As shown in Fig.~\ref{fig:ConvforN_epsilon_loss}, values of the loss function in NbAFL are decreasing as we relax the privacy guarantees (increasing $\epsilon$).
Meanwhile, in Fig.~\ref{fig:ConvforN_epsilon_acc}, testing accuracies are also increasing as the privacy parameter reduces. Such observation results are in line with~\textbf{Remark~\ref{remark:ConvforN_epsilon}}.

\begin{figure}[htb]
\centering
\includegraphics[width=3.5in,angle=0]{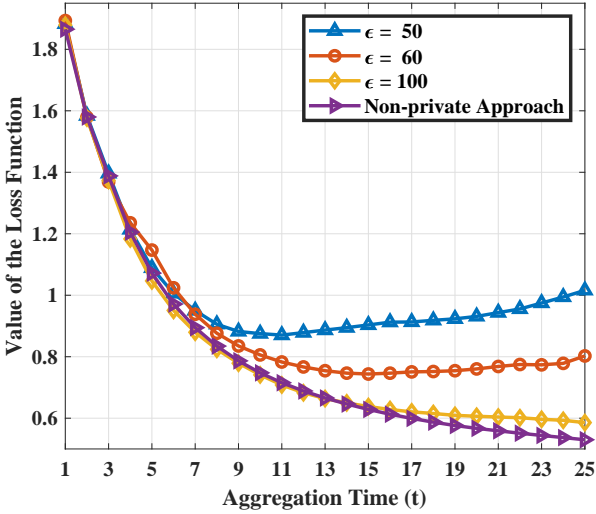}
\caption{The comparison of training loss with various privacy levels for 50 clients using $\epsilon = 50$, $\epsilon = 60$ and $\epsilon = 100$, respectively.}
\label{fig:ConvforK_epsilon_loss}
\end{figure}
\begin{figure}[htb]
\centering
\includegraphics[width=3.5in,angle=0]{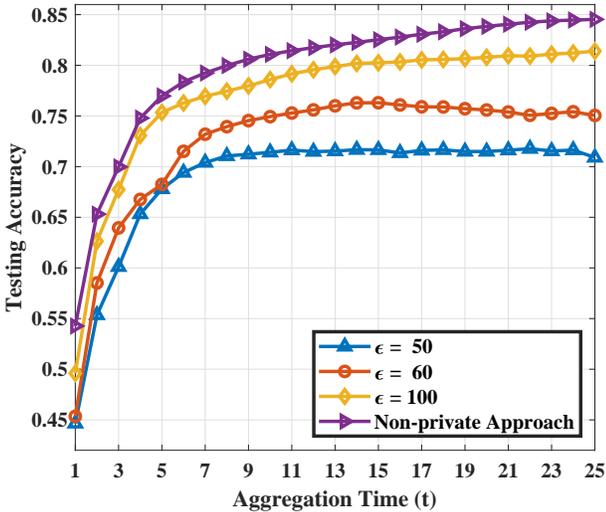}
\caption{The comparison of training accuracy with various privacy levels for 50 clients using $\epsilon = 50$, $\epsilon = 60$ and $\epsilon = 100$, respectively.}
\label{fig:ConvforK_epsilon_acc}
\end{figure}

Considering the $K$-client random scheduling, in Fig.~\ref{fig:ConvforK_epsilon_loss} and Fig.~\ref{fig:ConvforK_epsilon_acc}, we investigate the performances with various protection levels $\epsilon=50$, $\epsilon=60$ and $\epsilon=100$.
For simulation parameters, we set $N = 50$, $K = 20$, $T = 25$, and $\delta = 0.01$.
As shown in Fig.~\ref{fig:ConvforK_epsilon_loss} and Fig.~\ref{fig:ConvforK_epsilon_acc}, the convergence performance under the $K$-client random scheduling is improved with an increasing $\epsilon$.
\subsection{Impact of the number of clients $N$}
\begin{figure}[htb]
\centering
\includegraphics[width=3.5in,angle=0]{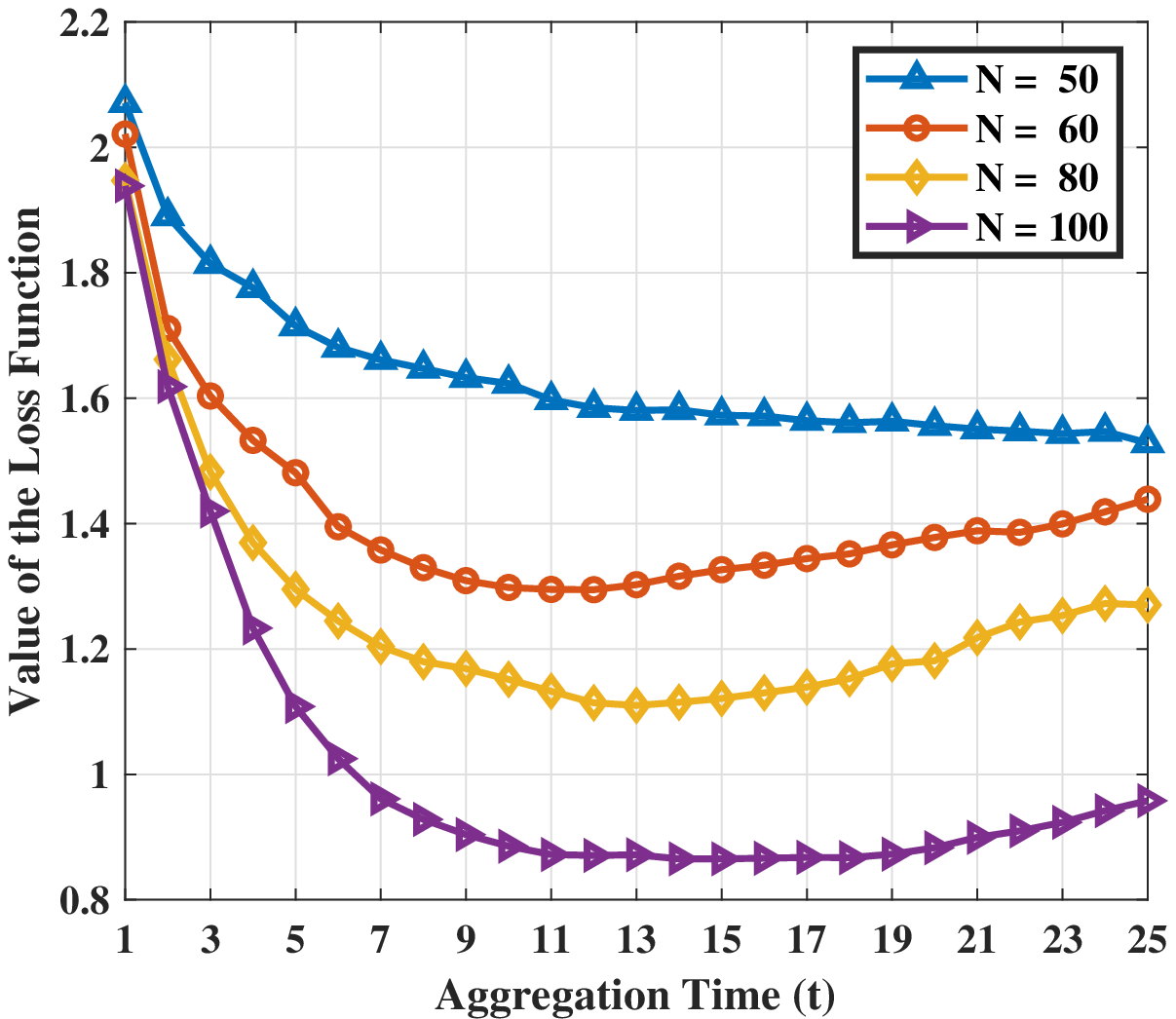}
\caption{The value of the loss function with various numbers of clients under $\epsilon = 60$ under NbAFL Algorithm with $50$ clients.}
\label{fig:ConvforN_N_loss}
\end{figure}
\begin{figure}[htb]
\centering
\includegraphics[width=3.5in,angle=0]{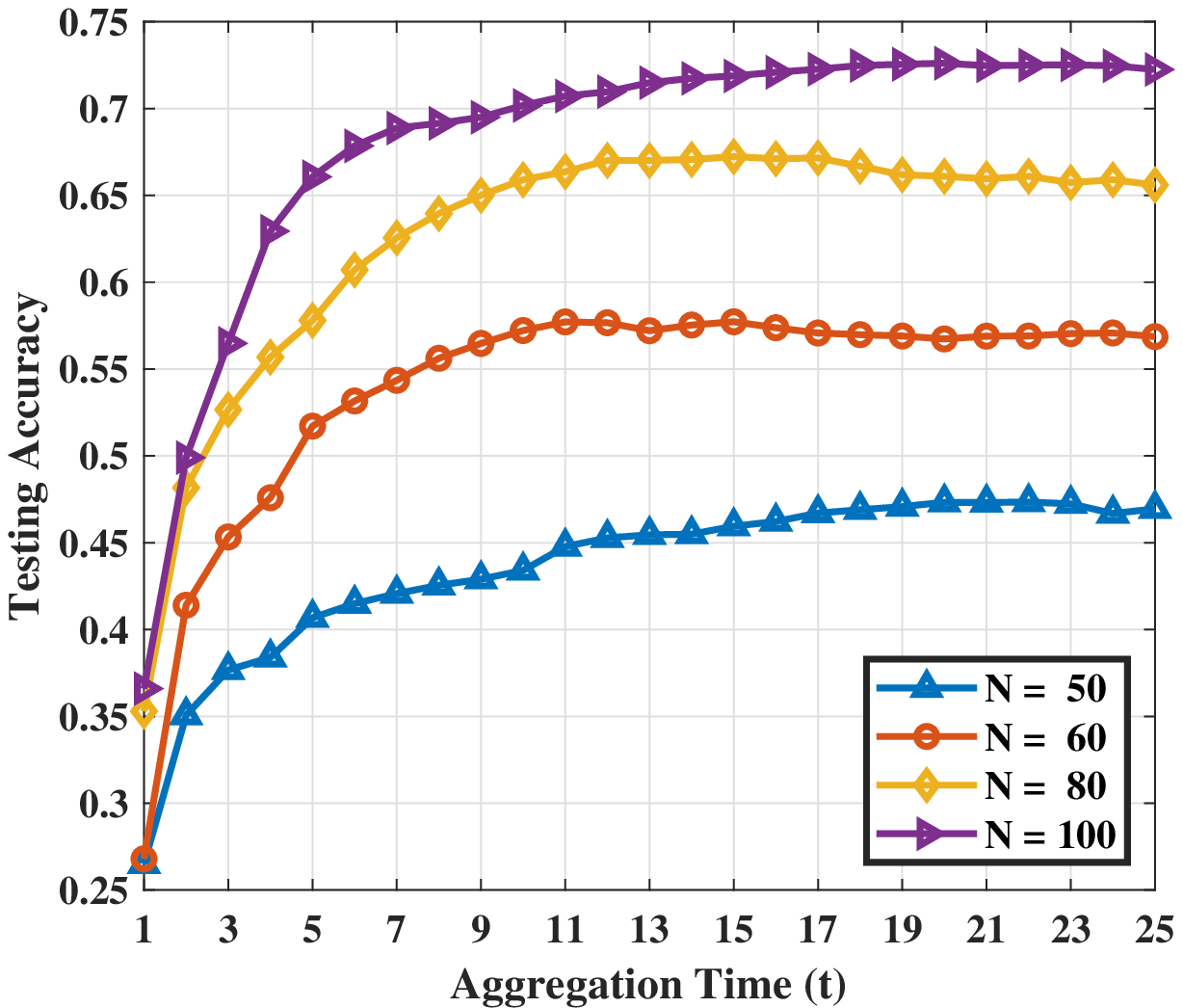}
\caption{The value of the loss function with various numbers of clients under $\epsilon = 60$ under NbAFL Algorithm with $50$ clients.}
\label{fig:ConvforN_N_acc}
\end{figure}

Fig.~\ref{fig:ConvforN_N_loss} and Fig.~\ref{fig:ConvforN_N_acc} compare the convergence performance of NbAFL under required protection level $\epsilon = 60$ and $\delta = 10^{-2}$ as a function of clients' number, $N$.
In this experiment, we set $N = 50$, $N = 60$, $N = 80$ and $N = 100$.
We notice that the performance among different numbers of clients is governed by~\textbf{Remark~\ref{remark:ConvforN_N}}.
This is because that more clients not only provide larger global datasets for training, but also bring down the of standard deviation additive noises due to the aggregation.
\subsection{Impact of the number of maximum aggregation times $T$}
\begin{figure}[htb]
\centering
\includegraphics[width=3.5in,angle=0]{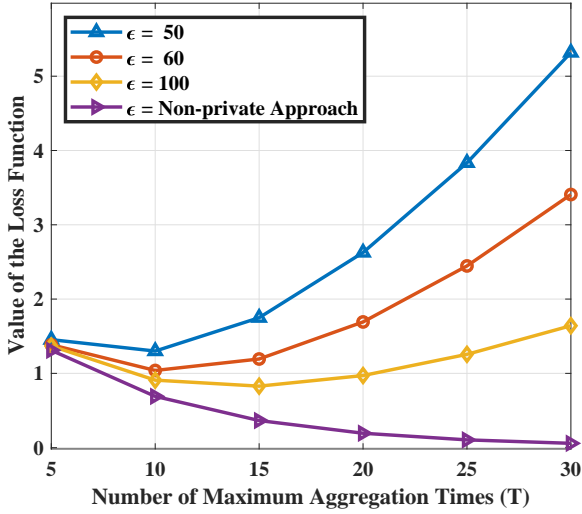}
\caption{The convergence upper bounds with various privacy levels $\epsilon = 50$, $60$ and $100$ under 50-clients' NbAFL algorithm.}
\label{fig:ConvforN_NumericalResult}
\end{figure}
In Fig. \ref{fig:ConvforN_NumericalResult}, we show the theoretical upper bound of training loss as a function of maximum aggregation times with various privacy levels $\epsilon = 50$, $60$ and $100$ under NbAFL algorithm.
Fig. \ref{fig:ConvforN_NumRe_ExpRe} compares the theoretical upper bound using the dotted line and experimental results using the solid line with $\epsilon = 60$ and $100$.
Fig.~\ref{fig:ConvforN_NumericalResult} and Fig. \ref{fig:ConvforN_NumRe_ExpRe} reveal that under a low privacy level (a large $\epsilon$), NbAFL gives a large improvement in terms of the convergence performance.
This observation is in line with \textbf{Remark~\ref{remark:ConvforN_T}}, and the reason comes from the fact that a lower privacy level decreases the standard deviation of additive noises and the server can obtain better quality ML model parameters from the clients.
Fig.~\ref{fig:ConvforN_NumericalResult} also implies that an optimal number of maximum aggregation times increases almost with respect to the increasing $\epsilon$.

\begin{figure}[htb]
\centering
\includegraphics[width=3.5in,angle=0]{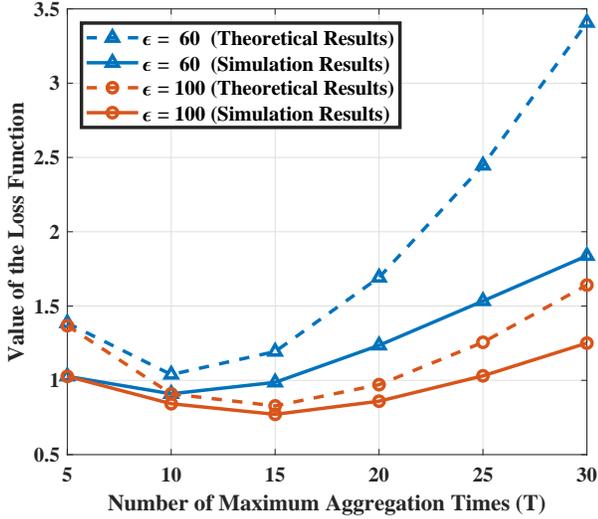}
\caption{The comparison of the loss function between experimental and theoretical results with the various aggregation times under NbAFL Algorithm with $50$ clients.}
\label{fig:ConvforN_NumRe_ExpRe}
\end{figure}
\begin{figure}[htb]
\centering
\includegraphics[width=3.5in,angle=0]{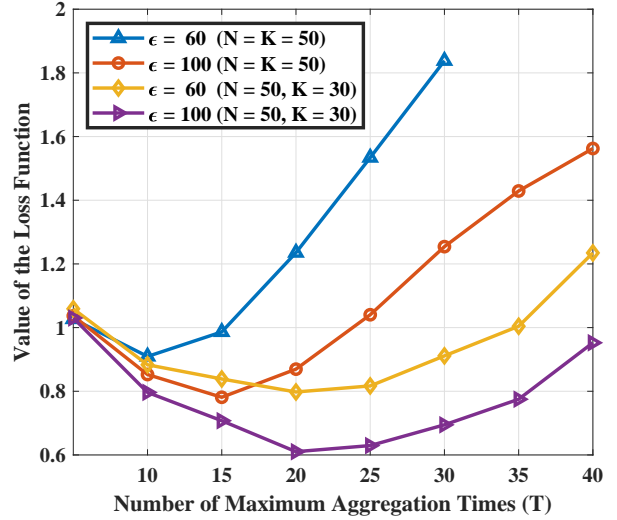}
\caption{The value of the loss function with various privacy levels $\epsilon = 60$ and $\epsilon = 80$ under NbAFL Algorithm with $50$ clients.}
\label{fig:ConvforK_T_loss}
\end{figure}

Fig.~\ref{fig:ConvforK_T_loss} compares the normal NbAFL and $K$-random scheduling based NbAFL for a given protection level.
In Fig.~\ref{fig:ConvforK_T_loss}, we plot the values of the loss function in NbAFL with various numbers of maximum aggregation times.
This figure shows that the value of loss function is a convex function of maximum aggregation times for a given protection leavel under NbAFL algorithm, which validates~\textbf{Remark~\ref{remark:ConvforN_T}}.
From Fig.~\ref{fig:ConvforK_T_loss}, we can also see that for a given $\epsilon$, $K$-random scheduling based NbAFL algorithm has a better convergence performance than the normalized NbAFL algorithm for a larger $T$.
This is because that $K$-random scheduling can bring down the variance of artificial noises with little performance loss.
\subsection{Impact of the number of chosen clients $K$}
\begin{figure}[htb]
\centering
\includegraphics[width=3.5in,angle=0]{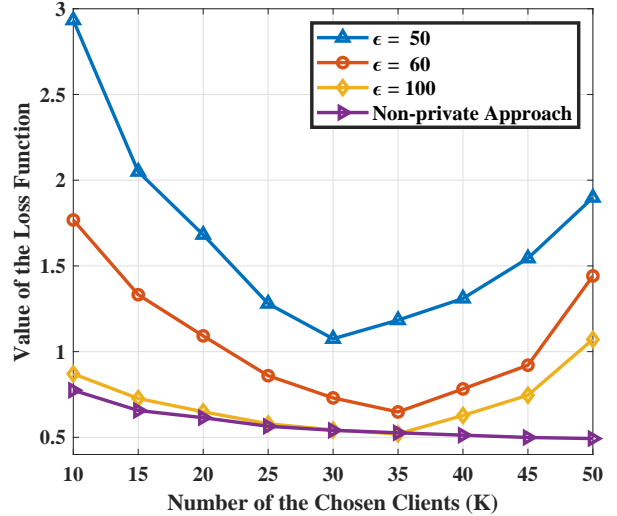}
\caption{The value of the loss function with various numbers of chosen clients under $\epsilon = 50, 60, 100$ under NbAFL Algorithm and non-private approach with $50$ clients.}
\label{fig:ConvforK_K_loss}
\end{figure}

In Fig.~\ref{fig:ConvforK_K_loss}, we plot values of the loss function with various numbers of chosen clients $K$ under the random scheduling policy in NbAFL.
The number of clients is $N = 50$, and $K$ clients are randomly chosen to participate in training and aggregation in each iteration.
In this experiment, we set $\epsilon= 50$, $\epsilon = 60$, $\epsilon = 80$ and $\delta = 0.01$.
Meanwhile, we also exhibit the performance of the non-private approach with various numbers of chosen clients $K$.
Note that an optimal $K$ which further improves the convergence performance exists for various protection levels, due to a trade-off between enhance privacy protection and involving larger global training datasets in each model updating round.
This observation is in line with~\textbf{Remark~\ref{remark:ConvforK_K}}.
The figure shows that in NbAFL, for a given protection level $\epsilon$, the $K$-random scheduling can obtain a better tradeoff than the normal selection policy.
\section{Conclusions}\label{Sec:Concl}
In this paper, we have focused on differential attacks in SGD based FL.
We first define a global $(\epsilon, \delta)$-DP requirement for both uplink and downlink channels, and develop variances of artificial noises at clients and server sides.
Then, we propose a novel framework based on the concept of global $(\epsilon, \delta)$-DP, named NbAFL.
We develop theoretically a convergence bound of the loss function of the trained FL model in the NbAFL.
From theoretical convergence bounds, we obatin the following results: 1) There is a tradeoff between the convergence performance and privacy protection levels, i.e., a better convergence performance leads to a lower protection level; 2) Increasing the number $N$ of overall clients participating in FL can improve the convergence performance, given a fixed privacy protection level; 3) There is an optimal number of maximum aggregation times in terms of convergence performance for a given protection level.
Furthermore, we propose a $K$-random scheduling strategy and also develop the corresponding convergence bound of the loss function in this case.
In addition to above three properties. we find that there exists an optimal value of $K$ that achieves the best convergence performance at a fixed privacy level.
Extensive simulation results confirm the correctness of our analysis.
Therefore, our analytical results are helpful for the design on privacy-preserving FL architectures with different tradeoff requirements on convergence performance and privacy levels.
\appendices
\section{Proof of Lemma~\ref{lem:SensitivityforSum}}\label{appendix:SensitivityforAgg}
From the downlink perspective, for all $\mathcal D_i$ and $\mathcal D_i'$ which differ in a signal entry, the sensitivity can be expressed as
\begin{equation}
\Delta s_{\text{D}}^{\mathcal D_i}=\max_{\mathcal D_i, \mathcal D_i'}\Vert s_{\text{D}}^{\mathcal D_i}-s_{\text{D}}^{\mathcal D_i'}\Vert.
\end{equation}
Based on~\eqref{localtrain} and~\eqref{DLfunction}, we have
\begin{equation}
s_{\text{D}}^{\mathcal D_i}=p_1\mathbf{w}_{1}(\mathcal D_1)+\ldots+p_i\mathbf{w}_{i}(\mathcal D_i)+\ldots+p_N\mathbf{w}_{N}(\mathcal D_N)
\end{equation}
and
\begin{equation}
s_{\text{D}}^{\mathcal D_i'}=p_1\mathbf{w}_{1}(\mathcal D_1)+\ldots+p_i\mathbf{w}_{i}(\mathcal D_i')+\ldots+p_N\mathbf{w}_{N}(\mathcal D_N),
\end{equation}
Furthermore, the sensitivity can be given as
\begin{multline}
\Delta s_{\text{D}}^{\mathcal D_i}=\max_{\mathcal D_i, \mathcal D_i'}\Vert p_i\mathbf{w}_{i}(\mathcal D_i)- p_i\mathbf{w}_{i}(\mathcal D_i')\Vert\\
p_{i}\max_{\mathcal D_i, \mathcal D_i'}\Vert \mathbf{w}_{i}(\mathcal D_i)-\mathbf{w}_{i}(\mathcal D_i')\Vert = p_{i}\Delta s_{\text{U}}^{\mathcal D_i}\leq\frac{2Cp_{i}}{m}.
\end{multline}
Hence, we can set $\Delta s_{\text{D}}^{\mathcal D_i}= \frac{2Cp_{i}}{m}$.
This completes the proof. $\hfill\square$
\section{Proof of Theorem~\ref{theorem:DPforDownCh}}\label{appendix:DPforDownCh}
To ensure a global $(\epsilon, \delta)$-DP in the uplink channels, the standard deviation of additive noises in client sides can be set to $\sigma_{\text{U}} = cL\Delta s_{\text{U}}/\epsilon$ due to the linear relation between $\epsilon$ and $\sigma_{\text{U}}$ with Gaussian mechanism, where $\Delta s_{\text{U}} = \frac{2C}{m}$ is the sensitivity for the aggregation operation and $m$ is the data size of each client.
We then set the sample in the $i$-th local noise vector to a same distribution $n_{i}\sim\varphi(n)$ (i.i.d for all $i$) because each client is coincident with the same global $(\epsilon, \delta)$-DP.
The aggregation process with artificial noises added by clients can be expressed as
\begin{equation}\label{equ:AppendA-1}
\begin{aligned}
\widetilde{\mathbf{w}} &= \sum\limits_{i=1}^{N}{p_{i}\left(\mathbf{w}_{i}+\mathbf{n}_{i}\right)}=\sum\limits_{i=1}^{N}{p_{i}\mathbf{w}_{i}}+\sum\limits_{i=1}^{N}{p_{i}{\mathbf{n}_{i}}}.
\end{aligned}
\end{equation}
The distribution $\phi_{N}(n)$ of $\sum_{i=1}^{N}{p_{i}{n_{i}}}$ can be expressed as
\begin{equation}\label{equ:AppendA-2}
\begin{aligned}
\phi_{N}(n)=\bigotimes\limits_{i=1}^{N}{\varphi_{i}\left(n\right)},
\end{aligned}
\end{equation}
where $p_{i}n_{i}\sim\varphi_{i}\left(n\right)$, and $\bigotimes$ is convolutional operation.

When we use Gaussian mechanism for $n_{i}$ with noise scale $\sigma_{\text{U}}$, the distribution of $p_{i}n_{i}$ is also Gaussian distribution. To obtain a small sensitivity $\Delta s_{\text{D}}$, we set $p_{i} = 1/N$.
Furthermore, the noise scale $\sigma_{\text{U}}/\sqrt{N}$ of the Gaussian distribution $\phi_{N}(n)$ can be calculated.
To ensure a global $(\epsilon, \delta)$-DP in downlink channels, we know the standard deviation of additive noises can be set to $\sigma_{\text{A}} = cT\Delta s_{\text{D}}/\epsilon$, where $\Delta s_{\text{D}} = 2C/mN$.
Hence, we can obtain the standard deviation of additive noises at the server as
\begin{multline}
\sigma_{\text{D}}=\sqrt{\sigma_{\text{A}}^{2}-\frac{\sigma_{\text{U}}^{2}}{N}}
=
\begin{cases}
\frac{2cC\sqrt{T^{2}-L^{2}N}}{mN\epsilon} & T>L\sqrt{N},\\
0&T\leq L\sqrt{N}.
\end{cases}
\end{multline}
Hence, \textbf{Theorem~\ref{theorem:DPforDownCh}} has been proved. $\hfill\square$
\section{Proof of Lemma~\ref{lem:Bdissimilarity}}\label{appendix:B_dissimilarity}
Due to \textbf{Assumption~\ref{ass:LossFunction}}, we have
\begin{equation}\label{equ:AppendB-1}
\begin{aligned}
\mathbb{E}& \left\{\Vert\nabla F_{i}(\mathbf{w})-\nabla F(\mathbf{w})\Vert^{2}\right\}\leq\mathbb{E}\{\varepsilon_{i}^{2}\}
\end{aligned}
\end{equation}
and
\begin{equation}\label{equ:AppendB-2}
\begin{aligned}
\mathbb{E}&\left\{\Vert\nabla F_{i}(\mathbf{w})-\nabla F(\mathbf{w})\Vert^{2}\right\}\\
&=\mathbb{E}\left\{ \Vert\nabla F_{i}(\mathbf{w})\Vert^{2}\right\}
-2\mathbb{E}\left\{\nabla F_{i}(\mathbf{w})^{\top}\right\}\nabla F(\mathbf{w})\\
&+\Vert\nabla F(\mathbf{w})\Vert^{2}
=\mathbb{E}\left\{ \Vert\nabla F_{i}(\mathbf{w})\Vert^{2}\right\}-\Vert\nabla F(\mathbf{w})\Vert^{2}.
\end{aligned}
\end{equation}
Considering~\eqref{equ:AppendB-1},~\eqref{equ:AppendB-2} and $\nabla F(\mathbf{w}) = \mathbb{E}\{\nabla F_{i}(\mathbf{w})\}$,
we have
\begin{equation}\label{equ:AppendB-3}
\begin{aligned}
\mathbb{E}\left\{ \Vert\nabla F_{i}(\mathbf{w})\Vert^{2}\right\}&\leq\Vert\nabla F(\mathbf{w})\Vert^{2}+ \mathbb{E}\{\varepsilon_{i}^{2}\}\\
&=\Vert\nabla F(\mathbf{w})\Vert^{2}B(\mathbf{w})^{2}.
\end{aligned}
\end{equation}
Note that when $\Vert\nabla F(\mathbf{w})\Vert^{2}\neq0$, there exists
\begin{equation}
B(\mathbf{w}) =\sqrt{1+\frac{\mathbb{E}\{\varepsilon_{i}^{2}\}}{\Vert\nabla F(\mathbf{w})\Vert^{2}}}\geq 1,
\end{equation}
which satisfies the equation.
We can notice that a smaller value of $B(\mathbf{w})$ implies that the local loss functions are more locally similar.
When all the local loss functions are the same, then $B(\mathbf{w}) = 1$, for all $\mathbf{w}$.
Therefore, we can have
\begin{equation}
\mathbb{E}\left\{ \Vert\nabla F_{i}(\mathbf{w})\Vert^{2}\right\}\leq\Vert\nabla F(\mathbf{w})\Vert^{2}B^{2},\quad{\forall i}, 
\end{equation}
where $B$ is the upper bound of $B(\mathbf{w})$.
This completes the proof.$\hfill\square$
\section{Proof of Lemma~\ref{theorem:Expincrement}}\label{appendix:ConvforN}
Considering the aggregation process with artificial noises added by clients and the server in the $(t+1)$-th aggregation, we have
\begin{equation}
\widetilde{\mathbf{w}}^{(t+1)}= \sum_{i=1}^{N}{p_{i}\mathbf{w}^{(t+1)}_{i}}+\mathbf{n}^{(t+1)},
\end{equation}
where
\begin{equation}
\mathbf{n}^{(t)} = \sum_{i=1}^{N}{p_{i}\mathbf{n}_{i}^{(t)}}+\mathbf{n}_{\text D}^{(t)}.
\end{equation}
Because $F_{i}(\cdot)$ is $\rho$-Lipschitz smooth, we know
\begin{multline}
F_{i}(\widetilde{\mathbf{w}}^{(t+1)})\leq F_{i}(\widetilde{\mathbf{w}}^{(t)})+\nabla F_{i}(\widetilde{\mathbf{w}}^{(t)})^{\top}(\widetilde{\mathbf{w}}^{(t+1)}-\widetilde{\mathbf{w}}^{(t)})\\
+\frac{\rho}{2}\Vert \widetilde{\mathbf{w}}^{(t+1)}-\widetilde{\mathbf{w}}^{(t)}\Vert^{2},
\end{multline}
for all $\widetilde{\mathbf{w}}^{(t+1)}$, $\widetilde{\mathbf{w}}^{(t)}$. Combining $F(\widetilde{\mathbf{w}}^{(t)})=\mathbb{E}\{F_{i}(\widetilde{\mathbf{w}}^{(t)})\}$ and $\nabla F(\widetilde{\mathbf{w}}^{(t)})=\mathbb{E}\{\nabla F_{i}(\widetilde{\mathbf{w}}^{(t)})\}$,
we have
\begin{multline}\label{equ:AppendC-1}
\mathbb{E}\{F(\widetilde{\mathbf{w}}^{(t+1)})-F(\widetilde{\mathbf{w}}^{(t)})\}
\leq\mathbb{E}\{\nabla F(\widetilde{\mathbf{w}}^{(t)})^{\top}(\widetilde{\mathbf{w}}^{(t+1)}-\widetilde{\mathbf{w}}^{(t)})\}\\
+\frac{\rho}{2}\mathbb{E}\{\Vert\widetilde{\mathbf{w}}^{(t+1)}-\widetilde{\mathbf{w}}^{(t)}\Vert^{2}\}.
\end{multline}
We define
\begin{multline}
J(\mathbf{w}^{(t+1)}_{i}; \widetilde{\mathbf{w}}^{(t)}) \triangleq F_{i}(\mathbf{w}^{(t+1)}_{i})+\frac{\mu}{2}\Vert\mathbf{w}^{(t+1)}_{i}-\widetilde{\mathbf{w}}^{(t)}\Vert^{2}.
\end{multline}
Then, we know
\begin{multline}
\nabla J(\mathbf{w}^{(t+1)}_{i}; \widetilde{\mathbf{w}}^{(t)}) = \nabla F_{i}(\mathbf{w}^{(t+1)}_{i})\\
+\mu\left(\mathbf{w}^{(t+1)}_{i}-\widetilde{\mathbf{w}}^{(t)}\right)
\end{multline}
and
\begin{multline}\label{equ:AppendC-2}
\widetilde{\mathbf{w}}^{(t+1)}-\widetilde{\mathbf{w}}^{(t)}
=\sum_{i=1}^{N} \left(\mathbf{w}^{(t+1)}_{i}+\mathbf{n}_{i}^{(t+1)}\right)+\mathbf{n}_{\text{D}}^{(t+1)}-\widetilde{\mathbf{w}}^{(t)}\\
=\frac{1}{\mu}\mathbb{E}\{\nabla J(\mathbf{w}^{(t+1)}_{i}; \widetilde{\mathbf{w}}^{(t)})-\nabla F_{i}(\mathbf{w}^{(t+1)}_{i})\}+\mathbf{n}^{(t+1)}.
\end{multline}
Because $F_{i}(\cdot)$ is $\rho$-Lipschitz smooth, we can obtain
\begin{multline}\label{equ:AppendC-3}
\mathbb{E}\{\nabla F_{i}(\mathbf{w}_{i}^{(t+1)})\} \leq \mathbb{E}\{\nabla F_{i}(\widetilde{\mathbf{w}}^{(t)})+\rho\Vert\mathbf{w}_{i}^{(t+1)}-\widetilde{\mathbf{w}}^{(t)}\Vert\}\\
=\nabla F(\widetilde{\mathbf{w}}^{(t)})+\rho\mathbb{E}\{\Vert\mathbf{w}_{i}^{(t+1)}-\widetilde{\mathbf{w}}^{(t)}\Vert\}.
\end{multline}
Now, let us bound $\Vert\mathbf{w}_{i}^{(t+1)}-\widetilde{\mathbf{w}}^{(t)}\Vert$.
We know
\begin{equation}\label{equ:AppendC-4}
\Vert\mathbf{w}_{i}^{(t+1)}-\widetilde{\mathbf{w}}^{(t)}\Vert \leq \Vert\mathbf{w}^{(t+1)}_{i}-\widehat{\mathbf{w}}_{i}^{(t+1)}\Vert + \Vert\widehat{\mathbf{w}}_{i}^{(t+1)}-\widetilde{\mathbf{w}}^{(t)} \Vert,
\end{equation}
where $\widehat{\mathbf{w}}_{i}^{(t+1)}=\arg\min_{\mathbf{w}}{J_{i}(\mathbf{w};\widetilde{\mathbf{w}}^{(t)})}$.
Let us define $\overline{\mu}=\mu-\rho>0$, then we know $J_{i}(\mathbf{w};\widetilde{\mathbf{w}}^{(t)})$ is $\overline{\mu}$-convexity.
Based on this, we can obtain
\begin{equation}\label{equ:AppendC-5}
\Vert\widehat{\mathbf{w}}_{i}^{(t+1)}-\mathbf{w}^{(t+1)}_{i} \Vert\leq \frac{\theta}{\overline{\mu}}\Vert\nabla F_{i}(\widetilde{\mathbf{w}}^{(t)})\Vert
\end{equation}
and
\begin{equation}\label{equ:AppendC-6}
\Vert\widehat{\mathbf{w}}_{i}^{(t+1)}-\widetilde{\mathbf{w}}^{(t)} \Vert\leq \frac{1}{\overline{\mu}}\Vert\nabla F_{i}(\widetilde{\mathbf{w}}^{(t)})\Vert,
\end{equation}
where $\theta$ denotes a $\theta$ solution of $\min_{\mathbf{w}}J_{i}(\mathbf{w};\widetilde{\mathbf{w}}^{(t)})$~\cite{DBLP:journals/corr/abs-1812-06127}.
Now, we can use the inequality \eqref{equ:AppendC-5} and \eqref{equ:AppendC-6} to obtain
\begin{equation}\label{equ:AppendC-7}
\Vert\mathbf{w}^{(t+1)}_{i}-\widetilde{\mathbf{w}}^{(t)} \Vert\leq \frac{1+\theta}{\overline{\mu}}\Vert\nabla F_{i}(\widetilde{\mathbf{w}}^{(t)})\Vert.
\end{equation}
Therefore,
\begin{equation}\label{equ:AppendC-8}
\begin{aligned}
&\Vert\widetilde{\mathbf{w}}^{(t+1)}-\widetilde{\mathbf{w}}^{(t)} \Vert
\leq \Vert\mathbf{w}^{(t+1)}-\widetilde{\mathbf{w}}^{(t)}\Vert+\Vert\mathbf{n}^{(t+1)}\Vert\\
&\quad\quad\leq\mathbb{E}\{\Vert\mathbf{w}_{i}^{(t+1)}-\widetilde{\mathbf{w}}^{(t)}\Vert\}+\Vert\mathbf{n}^{(t+1)}\Vert\\
&\quad\quad\leq \frac{1+\theta}{\overline{\mu}}\mathbb{E}\{\Vert\nabla F_{i}(\widetilde{\mathbf{w}}^{(t)})\Vert\}+\Vert\mathbf{n}^{(t+1)}\Vert\\
&\quad\quad\leq \frac{B(1+\theta)}{\overline{\mu}}\Vert\nabla F(\widetilde{\mathbf{w}}^{(t)})\Vert+\Vert\mathbf{n}^{(t+1)}\Vert.
\end{aligned}
\end{equation}
Using~\eqref{equ:AppendC-3} and~\eqref{equ:AppendC-4}, we know
\begin{equation}\label{equ:AppendC-9}
\begin{aligned}
\Vert\mathbb{E}&\{\nabla F_{i}(\mathbf{w}_{i}^{(t+1)})\}-\nabla F(\widetilde{\mathbf{w}}^{(t)})-\mathbb{E}\{\nabla J(\mathbf{w}^{(t+1)}_{i}; \widetilde{\mathbf{w}}^{(t)})\}\Vert\\
&\leq \rho\mathbb{E}\{\Vert\mathbf{w}_{i}^{(t+1)}-\widetilde{\mathbf{w}}^{(t)}\Vert\}+\mathbb{E}\{\nabla J(\mathbf{w}^{(t+1)}_{i}; \widetilde{\mathbf{w}}^{(t)})\}\\
&\leq\frac{ \rho B(1+\theta)}{\overline{\mu}}\Vert\nabla F(\widetilde{\mathbf{w}}^{(t)})\Vert+ B\theta\Vert\nabla F(\widetilde{\mathbf{w}}^{(t)})\Vert.\\
\end{aligned}
\end{equation}
Substituting~\eqref{equ:AppendC-3},~\eqref{equ:AppendC-8} and~\eqref{equ:AppendC-9} into~\eqref{equ:AppendC-1}, we know
\begin{multline}\label{equ:AppendC-10}
\mathbb{E}\{F(\widetilde{\mathbf{w}}^{(t+1)})-F(\widetilde{\mathbf{w}}^{(t)})\}\\
\leq\mathbb{E}\left\{\nabla F(\widetilde{\mathbf{w}}^{(t)})^{\top}\left(-\frac{1}{\mu}\nabla F(\widetilde{\mathbf{w}}^{(t)})+\frac{1}{\mu}\mathbf{n}^{(t+1)}\right.\right.\\
\quad\left.\left.+\left(\frac{ \rho B(1+\theta)}{\mu\overline{\mu}}+\frac{ B\theta}{\mu}\right)\Vert\nabla F(\widetilde{\mathbf{w}}^{(t)})\Vert\right)\right\}\\
\quad+\frac{\rho}{2}\mathbb{E}\left\{\left[\frac{B(1+\theta)}{\overline{\mu}}\Vert\nabla F(\widetilde{\mathbf{w}}^{(t)})\Vert+\Vert\mathbf{n}^{(t+1)}\Vert\right]^{2}\right\}.
\end{multline}
Then, using triangle inequation, we can obtain
\begin{equation}\label{equ:AppendC-11}
\begin{aligned}
\mathbb{E}&\{F(\widetilde{\mathbf{w}}^{(t+1)})-F(\widetilde{\mathbf{w}}^{(t)})\}\leq\lambda_{2}\Vert\nabla F(\widetilde{\mathbf{w}}^{(t)})\Vert^{2}\\
&+\lambda_{1}\mathbb{E}\{\Vert\mathbf{n}^{(t+1)}\Vert\}\Vert\nabla F(\widetilde{\mathbf{w}}^{(t)})\Vert+\lambda_{0}\mathbb{E}\{\Vert\mathbf{n}^{(t+1)}\Vert^{2}\}.
\end{aligned}
\end{equation}
where
\begin{equation}
\lambda_{2}=-\frac{1}{\mu}+\frac{B}{\mu}\left[\frac{\rho(1+\theta)}{\overline{\mu}}+\theta\right]+\frac{\rho B^{2}(1+\theta)^{2}}{2\overline{\mu}^{2}},
\end{equation}
\begin{equation}
\lambda_{1} = \frac{1}{\mu}+\frac{\rho B(1+\theta)}{\overline{\mu}} \,\text{and}\, \lambda_{0} = \frac{\rho}{2}.
\end{equation}
In this convex case, where $\overline{\mu}=\mu$, if $\theta = 0$, all subproblems are solved accurately. We know $\lambda_{2}=-\frac{1}{\mu}+\frac{\rho B}{\mu^{2}}+\frac{\rho B^{2}}{2\mu^{2}}$,
$\lambda_{1} = \frac{1}{\mu}+\frac{\rho B}{\mu}$
and $\lambda_{0} = \frac{\rho}{2}$.
This completes the proof. $\hfill\square$
\section{Proof of Theorem~\ref{theorem:ConvUpperBound}}\label{appendix:ConvUpperBound}
We assume that $F$ satisfies the Polyak-Lojasiewicz inequality~\cite{Nesterov:2014:ILC:2670022} with positive parameter $l$, which implies that
\begin{equation}\label{equ:AppendD-1}
\begin{aligned}
\mathbb{E}\{F(\widetilde{\mathbf{w}}^{(t)})-F(\mathbf{w}^{*})\}\leq\frac{1}{2l}\Vert \nabla F(\widetilde{\mathbf{w}}^{(t)})\Vert^{2}.
\end{aligned}
\end{equation}
Moreover, subtract $\mathbb{E}\{F(\mathbf{w}^{*})\}$ in both sides of \eqref{equ:AppendC-11}, we know
\begin{multline}
\mathbb{E}\{F(\widetilde{\mathbf{w}}^{(t+1)})-F(\mathbf{w}^{*})\}\\
\leq \mathbb{E}\{F(\widetilde{\mathbf{w}}^{(t)})-F(\mathbf{w}^{*})\}
+\lambda_{2}\Vert\nabla F(\widetilde{\mathbf{w}}^{(t)})\Vert^{2}\\
+\lambda_{1}\mathbb{E}\{\Vert\mathbf{n}^{(t+1)}\Vert\}\Vert\nabla F(\widetilde{\mathbf{w}}^{(t)})\Vert
+\lambda_{0}\mathbb{E}\{\Vert\mathbf{n}^{(t+1)}\Vert^{2}\}.
\end{multline}
Considering $\Vert\nabla F(\mathbf{w}^{(t)})\Vert\leq\beta$ and~\eqref{equ:AppendD-1}, we have
\begin{multline}\label{equ:AppendD-2}
\mathbb{E}\{F(\widetilde{\mathbf{w}}^{(t+1)})-F(\mathbf{w}^{*})\}\leq (1+2l\lambda_{2})\mathbb{E}\{F(\widetilde{\mathbf{w}}^{(t)})-F(\mathbf{w}^{*})\}\\
+\lambda_{1}\beta\mathbb{E}\{\Vert\mathbf{n}^{(t+1)}\Vert\}+\lambda_{0}\mathbb{E}\{\Vert\mathbf{n}^{(t+1)}\Vert^{2}\},
\end{multline}
where $F(\mathbf{w}^{*})$ is the loss function corresponding to the optimal parameters $\mathbf{w}^{*}$.
Considering the same and independent distribution of additive noises, we define $\mathbb{E}\{\Vert\mathbf{n}^{(t)}\Vert\}=\mathbb{E}\{\Vert\mathbf{n}\Vert\}$ and $\mathbb{E}\{\Vert\mathbf{n}^{(t)}\Vert^{2}\}=\mathbb{E}\{\Vert\mathbf{n}\Vert^{2}\}$, for $0 \leq t \leq T$.
Applying \eqref{equ:AppendD-2} recursively, we have
\begin{multline}\label{equ:AppendD-3}
\mathbb{E}\{F(\widetilde{\mathbf{w}}^{(T)})-F(\mathbf{w}^{*})\}
\leq (1+2l\lambda_{2})^{T}\mathbb{E}\{F(\mathbf{w}^{(0)})-F(\mathbf{w}^{*})\}\\
+\left(\lambda_{1}\beta\mathbb{E}\{\Vert\mathbf{n}\Vert\}+\lambda_{0}\mathbb{E}\{\Vert\mathbf{n}\Vert^{2}\}\right)\sum_{t=0}^{T-1}{\left(1+2l\lambda_{2}\right)^t}\\
=(1+2l\lambda_{2})^{T}\mathbb{E}\{F(\mathbf{w}^{(0)})-F(\mathbf{w}^{*})\}\\
+\left(\lambda_{1}\beta\mathbb{E}\{\Vert\mathbf{n}\Vert\}+\lambda_{0}\mathbb{E}\{\Vert\mathbf{n}\Vert^{2}\}\right)\frac{\left(1+2l\lambda_{2}\right)^T-1}{2l\lambda_{2}}.
\end{multline}

If $T\leq L\sqrt{N}$ and then $\sigma_{\text{D}}= 0$, this case is special. Hence, we will consider the condition that $T>L\sqrt{N}$.
Based on~\eqref{equ:theorem2-2}, we have $\sigma_\text{A} = \Delta s_{\text{D}}Tc/\epsilon$.
Hence, we can obtain
\begin{multline}\label{equ:AppendD-4}
\mathbb{E}\{\Vert \mathbf{n}\Vert\} = \frac{\Delta s_{\text{D}}Tc}{\epsilon}\sqrt{\frac{2N}{\pi }}\,\text{and}\, \mathbb{E}\{\Vert \mathbf{n}\Vert^{2}\} = \frac{\Delta s_{\text{D}}^{2}T^{2}c^{2}N}{\epsilon^{2}}.
\end{multline}
Substituting \eqref{equ:AppendD-4} into \eqref{equ:AppendD-3}, setting $\Delta s_{\text{D}} = 1/N$ and $F(\mathbf{w}^{(0)})-F(\mathbf{w}^{*})=\Theta$, we have
\begin{equation}\label{equ:AppendD-6}
\begin{aligned}
&\mathbb{E}\{F(\widetilde{\mathbf{w}}^{(T)})-F(\mathbf{w}^{*})\}
\leq (1+2l\lambda_{2})^{T}\Theta\\
&\quad+\left(\frac{\lambda_{1}T\beta c}{\epsilon}\sqrt{\frac{2}{N\pi}}+\frac{\lambda_{0}T^{2}c^{2}}{\epsilon^{2}N}\right)\frac{\left(1+2l\lambda_{2}\right)^T-1}{2l\lambda_{2}}\\
&=P^{T}\Theta+\left(\frac{\kappa_{1}T}{\epsilon}+\frac{\kappa_{0}T^{2}}{\epsilon^{2}}\right)\left(1-P^T\right),
\end{aligned}
\end{equation}
where $P = 1+2l\lambda_{2}$, $\kappa_{1} = \frac{\lambda_{1}\beta c}{m(P-1)}\sqrt{\frac{2}{N\pi}}$ and $\kappa_{0} = \frac{\lambda_{0}c^{2}}{m^2(P-1)N}$.
This completes the proof. $\hfill\square$
\section{Proof of Lemma~\ref{lem:NoiseKSche}}\label{appendix:DPforKshe}
We define the sampling parameter $q\triangleq K/N$ to  represent the probability of being selected by the server for each client in an aggregation.
Let $\mathcal{M}_{1:T}$ denote $(\mathcal{M}_{1},\ldots,\mathcal{M}_{T})$ and similarly let $o_{1:T}$ denote a sequence of outcomes $(o_{1},\ldots,o_{T})$.
Considering a global $(\epsilon, \delta)$-DP in the downlinks channels, we use $\sigma_{\text{A}}$ to represent the standard deviation of aggregated Gaussian noises. With neighboring datasets $\mathcal D_i$ and $\mathcal D'_i$, we are looking at
\begin{equation}\label{equ:AppendF-1}
\begin{aligned}
&\left\vert\ln\frac{\Pr[\mathcal{M}_{1:T}(\mathcal D'_{i,1:T})=o_{1:T}]}{\Pr[\mathcal{M}_{1:T}(\mathcal D_{i,1:T})=o_{1:T}]}\right\vert\\
&\quad= \left\vert\sum_{i=1}^{T}{\ln\frac{(1-q)e^{-\frac{\Vert
n\Vert^{2}}{2\sigma_{\text{A}}^{2}}}+qe^{-\frac{\Vert
n+\Delta s_{\text{D}}\Vert^{2}}{2\sigma_{\text{A}}^{2}}}}{e^{-\frac{\Vert n\Vert^{2}}{2\sigma_{\text{A}}^{2}}}}}\right\vert\\
&\quad=\left\vert\sum_{i=1}^{T}{\ln\left(1-q+qe^{-\frac{2n\Delta s_{\text{D}}+
\Delta s_{\text{D}}^{2}}{2\sigma_{\text{A}}^{2}}}\right)}\right\vert\\
&\quad=\left\vert{\ln\prod\limits_{i=1}^{T}\left(1-q+qe^{-\frac{2n\Delta s_{\text{D}}+
\Delta s_{\text{D}}^{2}}{2\sigma_{\text{A}}^{2}}}\right)}\right\vert.
\end{aligned}
\end{equation}
This quantity is bounded by $\epsilon$, we require
\begin{equation}\label{equ:AppendF-2}
\begin{aligned}
\left\vert\ln\frac{\Pr[\mathcal{M}_{1:T}(\mathcal D'_{i,1:T})=o_{1:T}]}{\Pr[\mathcal{M}_{1:T}(\mathcal D_{i,1:T})=o_{1:T}]}\right\vert
\leq\epsilon.
\end{aligned}
\end{equation}
Considering the independence of adding noises, we know
\begin{equation}\label{equ:AppendF-3}
\begin{aligned}
T\ln\left(1-q+qe^{-\frac{2n\Delta s_{\text{D}}+\Vert\Delta s_{\text{D}}\Vert^{2}}{2\sigma_{\text{A}}^{2}}}\right)
\geq-\epsilon.
\end{aligned}
\end{equation}
We can obtain the result
\begin{equation}
n\leq -\frac{\sigma_{\text{A}}^{2}}{\Delta s_{\text{D}}}\ln\left(\frac{\exp(-\frac{\epsilon}{T})}{q}-\frac{1}{q}+1\right)-\frac{\Delta s_{\text{D}}}{2}.
\end{equation}
We set
\begin{equation}\label{equ:AppendF-5}
\begin{aligned}
b=-\frac{T}{\epsilon}\ln\left(\frac{\exp(-\epsilon/T)-1}{q}+1\right).
\end{aligned}
\end{equation}
Hence,
\begin{equation}\label{equ:AppendF-4}
\begin{aligned}
\ln\left(\frac{\exp(-\epsilon/T)-1}{q}+1\right)=-\frac{b\epsilon}{T}.
\end{aligned}
\end{equation}
Note that $\epsilon$ and $T$ should satisfy
\begin{equation}\label{equ:AppendF-6}
\begin{aligned}
\epsilon<-T\ln\left(1-q\right) \,\,\text{or}\,\, T>\frac{-\epsilon}{\ln\left(1-q\right)}.
\end{aligned}
\end{equation}
Then,
\begin{equation}\label{equ:AppendF-7}
\begin{aligned}
n \leq \frac{\sigma_{\text{A}}^{2}b\epsilon}{T\Delta s_{\text{D}}}-\frac{\Delta s_{\text{D}}}{2}.
\end{aligned}
\end{equation}
Using the tail bound $\Pr[n>\eta]\leq\frac{\sigma_{\text{A}}}{\sqrt{2\pi}}\frac{1}{\eta}e^{-\eta^{2}/2\sigma_{\text{A}}^{2}}$,
we can obtain
\begin{equation}\label{equ:AppendF-8}
\begin{aligned}
\ln\left(\frac{\eta}{\sigma_{\text{A}}}\right) + \frac{\eta^{2}}{2\sigma_{\text{A}}^{2}} > \ln\left(\sqrt{\frac{2}{\pi}}\frac{1}{\delta}\right).
\end{aligned}
\end{equation}
Let us set $\sigma_{\text{A}} = c\Delta s_{\text{D}}T/b\epsilon$, if $b\epsilon/T \in(0, 1)$, the inequation~\eqref{equ:AppendF-8} can be solved as
\begin{equation}\label{equ:AppendF-9}
\begin{aligned}
c^{2}\geq 2\ln\left(\frac{1.25}{\delta}\right).
\end{aligned}
\end{equation}
Meanwhile, $\epsilon$ and $T$ should satisfy
\begin{equation}\label{equ:AppendF-10}
\begin{aligned}
\epsilon<-T\ln\left(1-q+\frac{q}{e}\right) \,\,\text{or}\,\, T>\frac{-\epsilon}{\ln\left(1-q+\frac{q}{e}\right)}.
\end{aligned}
\end{equation}
If $b\epsilon/T > 1$, we can also obtain $\sigma_{\text{A}} = c\Delta s_{\text{D}}T/b\epsilon$ by adjusting the value of $c$.
The standard deviation of requiring noises is given as
\begin{equation}\label{equ:AppendF-11}
\begin{aligned}
\sigma_{\text{A}} \geq \frac{c\Delta s_{\text{D}}T}{b\epsilon}.
\end{aligned}
\end{equation}
Hence, if Gaussian noises are added at the client sides, we can obtain the additive noise scale in the server as
\begin{multline}
\sigma_{\text{D}}=\sqrt{\left(\frac{c\Delta s_{\text{D}}T}{b\epsilon}\right)^{2}-\frac{c^{2}L^{2}\Delta s_{\text{U}}^{2}}{K\epsilon^{2}}}\\
=
\begin{cases}
\frac{2cC\sqrt{\frac{T^{2}}{b^{2}}-L^{2}K}}{mK\epsilon} & T>bL\sqrt{K},\\
0&T\leq bL\sqrt{K}.
\end{cases}
\end{multline}
Furthermore, considering~\eqref{equ:AppendF-6}, we can obtain
\begin{equation}
\begin{aligned}
\sigma_{\text{D}}=
\begin{cases}
\frac{2cC\sqrt{\frac{T^{2}}{b^{2}}-L^{2}K}}{mK\epsilon} & T>\frac{\epsilon}{\gamma},\\
0&T\leq \frac{\epsilon}{\gamma},
\end{cases}
\end{aligned}
\end{equation}
where
\begin{equation}
\begin{split}
\gamma = -\ln\left({1-q+ qe^{\frac{-\epsilon}{L\sqrt{K}}}}\right).
\end{split}
\end{equation}
This completes the proof. $\hfill\square$
\section{proof of Theorem~\ref{theorem:ConvKSche}}\label{appendix:ConvforK}
Here we define
\begin{equation}
\mathbf{v}^{(t)}=\sum_{i=1}^{K}{p_{i}\mathbf{w}^{(t)}_{i}},
\end{equation}
\begin{equation}
\widetilde{\mathbf{v}}^{(t)}=\sum_{i=1}^{K}{p_{i}\left(\mathbf{w}^{(t)}_{i}+\mathbf{n}^{(t)}_{i}\right)}+\mathbf{n}^{(t)}_{\text{D}}
\end{equation}
and
\begin{equation}
\mathbf{n}^{(t+1)}=\sum_{i=1}^{K}{p_{i}\mathbf{n}^{(t+1)}_{i}}+\mathbf{n}^{(t)}_{\text{D}}.
\end{equation}
which considers the aggregated parameters under $K$-random scheduling.
Because $F_{i}(\cdot)$ and $F(\cdot)$ are $\beta$-Lipschitz, we obtain that
\begin{equation}\label{equ:AppendE-1}
\begin{aligned}
\mathbb{E}\{F(\widetilde{\mathbf{v}}^{(t+1)})\}-F(\mathbf{w}^{(t+1)})\leq \beta\Vert\widetilde{\mathbf{v}}^{(t+1)}-\mathbf{w}^{(t+1)}\Vert.
\end{aligned}
\end{equation}
Because $\beta$ is the Lipchitz continuity constant of function $F$, we have
\begin{equation}\label{equ:AppendE-2}
\begin{aligned}
\beta \leq \Vert\nabla F(\widetilde{\mathbf{v}}^{(t)})\Vert+\rho\left(\Vert\mathbf{w}^{(t+1)}-\widetilde{\mathbf{v}}^{(t)}\Vert\right.\\
\left.+\Vert\mathbf{v}^{(t+1)}-\widetilde{\mathbf{v}}^{(t)}\Vert\right).
\end{aligned}
\end{equation}
From \eqref{equ:AppendC-8}, we know
\begin{equation}
\Vert\mathbf{w}^{(t+1)}-\widetilde{\mathbf{v}}^{(t)}\Vert\leq \frac{B(1+\theta)}{\overline{\mu}}\Vert\nabla F(\widetilde{\mathbf{v}}^{(t)})\Vert.
\end{equation}
Then, we have
\begin{multline}\label{equ:AppendE-3}
\mathbb{E}\{\Vert\mathbf{w}^{(t+1)}-\widetilde{\mathbf{v}}^{(t+1)}\Vert^{2}\}=\Vert\mathbf{w}^{(t+1)}\Vert^{2}\\
-2[\mathbf{w}^{(t+1)}]^{\top} \mathbb{E}\{\widetilde{\mathbf{v}}^{(t+1)}\}+\mathbb{E}\{\Vert\widetilde{\mathbf{v}}^{(t+1)}\Vert^{2}\}.
\end{multline}
Furthermore, we can obtain
\begin{equation}\label{equ:AppendE-4}
\begin{aligned}
\mathbb{E}&\{\widetilde{\mathbf{v}}^{(t+1)}\}=\frac{1}{\left(\begin{matrix}
   N  \\
   K \\
  \end{matrix}\right)}\frac{\left(\begin{matrix}
   N  \\
   K \\
  \end{matrix}\right)}{N}K\sum_{i=1}^{N}{p_{i}\mathbf{w}^{(t+1)}_{i}}+\mathbf{n}^{(t+1)}\\
&=\mathbb{E}\{\mathbf{w}_{i}^{(t+1)}\}+\mathbf{n}^{(t+1)}=\mathbf{w}^{(t+1)}+\mathbf{n}^{(t+1)}
\end{aligned}
\end{equation}
and
\begin{multline}\label{equ:AppendE-5}
\mathbb{E}\{\Vert\widetilde{\mathbf{v}}^{(t+1)}\Vert^{2}\}=\mathbb{E}\left\{\left\Vert \sum\limits_{i=1}^{K}{\left(p_{i}\mathbf{w}^{(t+1)}_{i}+p_{i}\mathbf{n}^{(t+1)}_{i}\right)}\right\Vert^{2}\right\}\\
=\mathbb{E}\left\{\left\Vert \sum\limits_{i=1}^{K}{p_{i}\mathbf{w}^{(t+1)}_{i}}\right\Vert^{2}\right\}+\mathbb{E}\left\{\left\Vert \sum\limits_{i=1}^{K}{p_{i}\mathbf{n}^{(t+1)}_{i}}\right\Vert^{2}\right\}\\
\quad+2\mathbb{E}\left\{\left[\sum\limits_{i=1}^{K}{p_{i}\mathbf{w}^{(t+1)}_{i}}\right]^{\top}\mathbf{n}^{(t+1)}\right\}.\\
\end{multline}
Due the independence between $\mathbf{w}^{(t+1)}_{i}$ and $\mathbf{n}^{(t+1)}_{i}$, we know
\begin{multline}\label{equ:AppendE-6}
\mathbb{E}\left\{\left\Vert \sum\limits_{i=1}^{K}{p_{i}\mathbf{w}^{(t+1)}_{i}}\right\Vert^{2}\right\}=\mathbb{E}\left\{ \sum\limits_{i=1}^{K}{\left\Vert p_{i}\mathbf{w}^{(t+1)}_{i}\right\Vert^{2}}\right\}\\.
\end{multline}
Note that we set $p_{i} = D_i/\sum_{i=1}^{K}{D_{i}}=1/K$ in $K$-random scheduling in order to a small sensitivity $\Delta s_{\text{D}}$.
We have
\begin{multline}
\mathbb{E}\left\{\left\Vert \sum\limits_{i=1}^{K}{p_{i}\mathbf{w}^{(t+1)}_{i}}\right\Vert^{2}\right\}=\frac{1}{NK} \sum\limits_{i=1}^{N}{\left\Vert\mathbf{w}^{(t+1)}_{i}\right\Vert^{2}}\\
+\frac{(K-1)}{NK(N-1)}\sum\limits_{i=1}^{N}\sum\limits_{j=1\bigcup j\neq i}^{N}{\left[ \mathbf{w}^{(t+1)}_{i}\right]^{\top}\mathbf{w}^{(t+1)}_{j}}\\
\leq \frac{1}{K^{2}}\sum\limits_{i = 1}^{K}\Vert\mathbf{w}^{(t+1)}_{i}\Vert^{2}+\frac{K-1}{K}\Vert\mathbf{w}^{(t+1)}\Vert^{2}
\end{multline}
and
\begin{multline}\label{equ:AppendE-7}
\mathbb{E}\{\Vert\widetilde{\mathbf{v}}^{(t+1)}\Vert^{2}\}\leq \frac{1}{K^{2}}\sum_{i=1}^{K}\Vert\mathbf{w}^{(t+1)}_{i}\Vert^{2}+\frac{K-1}{K}\Vert\mathbf{w}^{(t+1)}\Vert^{2}\\
\quad+\Vert \mathbf{n}^{(t+1)}\Vert^{2} + 2[\mathbf{w}^{(t+1)}]^{\top}\mathbf{n}^{(t+1)}.
\end{multline}
Combining \eqref{equ:AppendE-3} and \eqref{equ:AppendE-7}, we can obtain
\begin{multline}\label{equ:AppendE-8}
\mathbb{E}\{\Vert\mathbf{w}^{(t+1)}-\widetilde{\mathbf{v}}^{(t+1)}\Vert^{2}\}\\
\leq\frac{1}{K^{2}}\sum_{i=1}^{K}\Vert\mathbf{w}^{(t+1)}_{i}\Vert^{2}-\frac{1}{K}\Vert\mathbf{w}^{(t+1)}\Vert^{2}+\Vert\mathbf{n}^{(t+1)}\Vert^{2}\\
\leq\frac{1}{K^{2}}\sum_{i=1}^{K}\Vert\mathbf{w}^{(t+1)}_{i}-\widetilde{\mathbf{v}}^{(t)}\Vert^{2}+\Vert\mathbf{n}^{(t+1)}\Vert^{2}.\\
\end{multline}
Using \eqref{equ:AppendC-7}, we know
\begin{multline}\label{equ:AppendE-9}
\mathbb{E}\{\Vert\mathbf{w}^{(t+1)}-\widetilde{\mathbf{v}}^{(t+1)}\Vert^{2}\}\leq\Vert\mathbf{n}^{(t+1)}\Vert^{2}+\\\frac{B^{2}(1+\theta)^{2}}{K\overline{\mu}^{2}}\Vert\nabla F(\widetilde{\mathbf{v}}^{(t)})\Vert^{2}.
\end{multline}
Moreover,
\begin{multline}\label{equ:AppendE-10}
\mathbb{E}\{\Vert\mathbf{w}^{(t+1)}-\widetilde{\mathbf{v}}^{(t+1)}\Vert\}\leq\Vert\mathbf{n}^{(t+1)}\Vert\\
+\frac{B(1+\theta)}{\overline{\mu}\sqrt{K}}\Vert\nabla F(\widetilde{\mathbf{v}}^{t})\Vert.
\end{multline}
Substituting~\eqref{equ:AppendC-11},~\eqref{equ:AppendE-2} and~\eqref{equ:AppendE-10} into~\eqref{equ:AppendE-1}, setting $\theta = 0$ and $\overline{\mu} = \mu$, we can obtain
\begin{multline}\label{equ:AppendE-11}
\mathbb{E}\{F(\widetilde{\mathbf{v}}^{(t+1)})\}-F(\widetilde{\mathbf{v}}^{(t)})\leq F(\mathbf{w}^{(t+1)})-F(\widetilde{\mathbf{v}}^{(t)})\\
\left(\Vert\nabla F(\widetilde{\mathbf{v}}^{(t)})\Vert+2\rho\Vert\mathbf{w}^{(t+1)}-\widetilde{\mathbf{v}}^{(t)}\Vert\right)\mathbb{E}\Vert\mathbf{w}^{(t+1)}-\widetilde{\mathbf{v}}^{(t+1)}\Vert\\
+\rho\mathbb{E}\{\Vert\mathbf{w}^{(t+1)}-\widetilde{\mathbf{v}}^{(t+1)}\Vert^{2}\}=\alpha_{2}\Vert\nabla F(\widetilde{\mathbf{v}}^{(t)})\Vert^{2}\\
+\alpha_{1} \Vert\mathbf{n}^{(t+1)}\Vert\Vert\nabla F(\widetilde{\mathbf{v}}^{(t)})\Vert+\alpha_{0}\Vert
\mathbf{n}^{(t+1)}\Vert^{2},
\end{multline}
where
\begin{equation}
\alpha_{2}=\frac{1}{\mu^{2}}\left(\frac{\rho B^{2}}{2}+\rho B+\frac{\rho B^{2}}{K}+\frac{2\rho B^{2}}{\sqrt{K}}+\frac{\mu B}{\sqrt{K}}-\mu\right),
\end{equation}
\begin{multline}
\alpha_{1} = 1+\frac{2\rho B}{\mu}+\frac{2\rho B\sqrt{K}}{\mu N} \,\text{and}\, \alpha_{0} = \frac{2\rho K}{N}+\rho.
\end{multline}

In this case, we take expectation $\mathbb{E}\{F(\widetilde{\mathbf{v}}^{(t+1)})-F(\widetilde{\mathbf{v}}^{(t)})\}$ as follows,
\begin{equation}\label{equ:AppendE-12}
\begin{aligned}
\mathbb{E}&\{F(\widetilde{\mathbf{v}}^{(t+1)})-F(\widetilde{\mathbf{v}}^{(t)})\}\leq \alpha_{2}\Vert\nabla F(\widetilde{\mathbf{v}}^{(t)})\Vert^{2}\\
&+\alpha_{1} \mathbb{E}\{\Vert\mathbf{n}^{(t+1)}\Vert\}\Vert\nabla F(\widetilde{\mathbf{v}}^{(t)})\Vert+\alpha_{0}\mathbb{E}\{\Vert\mathbf{n}^{(t+1)}\Vert^{2}\}.
\end{aligned}
\end{equation}
For $\Theta>0$ and $f(\mathbf{v}^{(0)})-f(\mathbf{w}^{*})=\Theta$, we can obtain
\begin{multline}\label{equ:AppendE-13}
\mathbb{E}\{F(\widetilde{\mathbf{v}}^{(t+1)})-F(\mathbf{w}^{*})\}\\
\leq\mathbb{E}\{F(\widetilde{\mathbf{v}}^{(t)})-F(\mathbf{w}^{*})\}+
\alpha_{2}\Vert\nabla F(\widetilde{\mathbf{v}}^{(t)})\Vert^{2}\\
+\alpha_{1}\beta\mathbb{E}\{\Vert\mathbf{n}^{(t+1)}\Vert\}+ \alpha_{0}\mathbb{E}\{\Vert\mathbf{n}^{(t+1)}\Vert^{2}\}.
\end{multline}
If we select the penalty parameter $\mu$ to make $\alpha_{2}<0$ and using \eqref{equ:AppendD-1}, we know
\begin{multline}\label{equ:AppendE-14}
\mathbb{E}\{F(\widetilde{\mathbf{v}}^{(t+1)})-F(\mathbf{w}^{*})\}\leq(1+2l\alpha_{2})\mathbb{E}\{F(\widetilde{\mathbf{v}}^{(t)})-F(\mathbf{w}^{*})\}\\
+\alpha_{1}\beta\mathbb{E}\{\Vert\mathbf{n}^{(t+1)}\Vert\}+ \alpha_{0}\mathbb{E}\{\Vert\mathbf{n}^{(t+1)}\Vert^{2}\}.
\end{multline}
Considering independence of additive noises and applying~\eqref{equ:AppendE-14} recursively, we have
\begin{multline}\label{equ:AppendE-15}
\mathbb{E}\{F(\widetilde{\mathbf{v}}^{(T)})-F(\mathbf{w}^{*})\}\leq(1+2l\alpha_{2})^{T}\mathbb{E}\{F(\mathbf{v}^{(0)})-F(\mathbf{w}^{*})\}\\
+\frac{1-(1+2l\alpha_{2})^{T}}{2l\alpha_{2}}\left(\alpha_{1}\beta\mathbb{E}\{\Vert\mathbf{n}\Vert\}+ \alpha_{0}\mathbb{E}\{\Vert\mathbf{n}\Vert^{2}\}\right)\\
=Q^{T}\Theta+\frac{1-Q^{T}}{1-Q}\left(\alpha_{1}\beta\mathbb{E}\{\Vert\mathbf{n}\Vert\}+ \alpha_{0}\mathbb{E}\{\Vert\mathbf{n}\Vert^{2}\}\right),
\end{multline}
where $Q = 1+2l\alpha_{2}$.
Substituting~\eqref{equ:AppendF-11} into~\eqref{equ:AppendE-15}, we can obtain
\begin{multline}\label{equ:AppendD-4}
\mathbb{E}\{\Vert \mathbf{n}\Vert\} = \frac{\Delta s_{\text{D}}Tc}{b\epsilon}\sqrt{\frac{2N}{\pi }}, \mathbb{E}\{\Vert \mathbf{n}\Vert^{2}\} = \frac{\Delta s_{\text{D}}^{2}T^{2}c^{2}N}{b^{2}\epsilon^{2}}
\end{multline}
and
\begin{equation}
\begin{aligned}
&\mathbb{E}\{F(\widetilde{\mathbf{v}}^{T})-F(\mathbf{w}^{*})\}
\leq Q^{T}\Theta \\
&\quad+ \frac{1-Q^{T}}{1-Q}\left(\frac{c\alpha_{1}\beta}{-mK\ln\left(1-\frac{N}{K}+\frac{N}{K}e^{-\frac{\epsilon}{T}}\right)}\sqrt{\frac{2}{\pi}}\right.\\
&\quad+\left.\frac{c^{2}\alpha_{0}}{m^{2}K^{2}\ln^{2}\left(1-\frac{N}{K}+\frac{N}{K}e^{-\frac{\epsilon}{T}}\right)}\right).
\end{aligned}
\end{equation}
This completes the proof. $\hfill\square$
\bibliographystyle{IEEEtran}
\bibliography{reference}

\end{document}